\documentclass{article}

\usepackage{url}
\usepackage{microtype}
\usepackage{graphicx}
\usepackage{subcaption}
\usepackage{booktabs} 
\usepackage{tabularx}
\usepackage{xcolor}
\usepackage{colortbl}
\usepackage{adjustbox}
\newcommand{\rev}[1]{#1}
\usepackage{hyperref}
\usepackage{xurl}
\hypersetup{breaklinks=true}

\usepackage{algorithm}
\usepackage{float} 

\usepackage[preprint]{icml2026}

\usepackage{amsmath}
\usepackage{amssymb}
\usepackage{mathtools}
\usepackage{amsthm}

\usepackage{multirow}
\usepackage[table]{xcolor}
\usepackage[capitalize,noabbrev]{cleveref}

\theoremstyle{plain}
\newtheorem{theorem}{Theorem}[section]

\theoremstyle{definition}

\theoremstyle{remark}

\usepackage{algorithm}

\usepackage[disable,textsize=tiny]{todonotes}

\icmltitlerunning{iLoRA: Bayesian Low-Rank Adaptation with Latent Interaction Graphs for Microbiome Diagnosis}

\begin{document}

\twocolumn[
  \icmltitle{iLoRA: Bayesian Low-Rank Adaptation with Latent Interaction Graphs for Microbiome Diagnosis}




  \begin{icmlauthorlist}
    \icmlauthor{Yang Song}{ucph-hdsai}
    \icmlauthor{Yixuan Zhang}{ucph-hdsai}
    \icmlauthor{Lingfa Meng}{ucph-hdsai}
    \icmlauthor{Tongyuan Hu}{ucph}
    \icmlauthor{Haizhou Shi}{rutgers} 
    \icmlauthor{Hao Wang}{rutgers}
    \icmlauthor{Samir Bhatt}{ucph-hdsai,imperial-mrc}
    \icmlauthor{Hengguan Huang}{ucph-hdsai,imperial-mrc}
  \end{icmlauthorlist}

  \icmlaffiliation{ucph}{University of Copenhagen, Copenhagen, Denmark}
  \icmlaffiliation{rutgers}{Rutgers University, New Brunswick, NJ, USA}
  \icmlaffiliation{ucph-hdsai}{Section of Health Data Science \& AI, Department of Public Health, University of Copenhagen, Copenhagen, Denmark}
  \icmlaffiliation{imperial-mrc}{MRC Centre for Global Infectious Disease Analysis, Department of Infectious Disease Epidemiology, School of Public Health, Faculty of Medicine, Imperial College London, London, United Kingdom}

  \icmlcorrespondingauthor{Hengguan Huang}{hengguan.huang@sund.ku.dk}
  \icmlcorrespondingauthor{Hao Wang}{hw488@cs.rutgers.edu}
  \icmlcorrespondingauthor{Samir Bhatt}{samir.bhatt@sund.ku.dk}

  \icmlkeywords{Machine Learning, ICML}

  \vskip 0.3in
]



\printAffiliationsAndNotice{}  

\begin{abstract}
  \rev{Parameter-efficient adaptation has made LLMs practical for domain prediction, but standard LoRA still relies on a static low-rank update and does not expose the latent interactions that often drive scientific labels. We introduce iLoRA. To our knowledge, it is the first \textbf{Bayesian graph-conditioned LoRA framework}. It infers a \textbf{latent interaction graph} from the input and uses it to generate input-conditioned LoRA updates. As a result, iLoRA learns prediction and latent interaction structure jointly, rather than training a predictor and applying interaction analysis only post hoc. We instantiate this idea for microbiome diagnosis, where disease state can depend on both species-level abundance and microbe--microbe cross-talk, and evaluate it in two complementary settings: interactive QA with human-annotated graphs, which tests latent structure recovery, and multi-cohort IBD diagnosis, which tests biomedical utility. Across both settings, iLoRA improves over strong LoRA and Bayesian adaptation baselines, recovers graphs aligned with human annotations and cohort-level microbiome associations, and provides calibrated uncertainty with moderate graph-branch overhead.}
\end{abstract}

\section{Introduction}
Reliable microbiome-based diagnosis is becoming a practical requirement for precision medicine at scale, particularly for chronic inflammatory diseases such as inflammatory bowel disease (IBD). Yet the gut microbiome is not merely a collection of independent features: it is an ecosystem whose function and dysregulation emerge from \emph{interactions} among microbial taxa \cite{Faust2012}. In IBD, disease signals manifest not only as species-level abundance shifts but also as changes in community organization and cross-talk, reflected in distorted co-occurrence topology and reorganized modular structure \cite{Baldassano2016} and in network-based biomarker discoveries \cite{Hu2023IBDnetwork}. These observations motivate diagnostic models that can exploit interaction structure rather than treating taxa as exchangeable, conditionally independent covariates.

A parallel trend is the increasing use of large language models (LLMs) and post-training adaptation pipelines for domain-specialized prediction and decision support. Parameter-efficient adaptation methods such as Low-Rank Adaptation (LoRA) enable strong performance with minimal trainable parameters \cite{Hu2022LoRA}, making them attractive for deployment in biomedical settings where data and compute are constrained. However, standard post-training pipelines typically focus on improving predictive accuracy while overlooking \emph{structured latent factors} that encode domain-specific dependencies. In microbiome applications, this gap is consequential: the clinically relevant signal may be distributed across coordinated taxa groups and interaction patterns, not isolated to marginal abundance changes. Moreover, clinical deployment requires not only accuracy but also calibrated uncertainty and robustness. Bayesian approximations such as dropout-based inference and ensemble-based uncertainty estimation have become standard tools for improving reliability in deep models \cite{pmlr-v48-gal16,NIPS2017_9ef2ed4b}, and recent work has begun to bring Bayesian perspectives into low-rank adaptation itself \cite{yang2024bayesian,wang2024blob}.

Despite the biological importance of microbial interactions, inferring microbiome networks from abundance profiles remains methodologically fragile. Microbiome measurements are compositional and sparse, and correlation- or association-based network estimates can vary substantially across methods and preprocessing choices; indeed, correlation detection strategies have been shown to differ widely in sensitivity and precision \cite{Weiss2016}. Downstream, disease studies often rely on post hoc pipelines that first train a predictor and then separately compute association networks, which can blur the statistical distinction between predictive features and mechanistic interactions. Complementary approaches—such as multivariable association discovery with covariate adjustment \cite{Mallick2021MaAsLin2} or generative simulators for benchmarking network inference \cite{Qian2024}—help contextualize findings, but they do not, by themselves, yield an end-to-end \emph{interaction-aware} diagnostic model.

We propose \textbf{iLoRA}. \rev{To our knowledge, it is the first \textbf{Bayesian graph-conditioned LoRA framework}. It infers a \textbf{latent interaction graph} from the input and uses it to generate input-conditioned LoRA updates.} In microbiome diagnosis, \rev{the graph is therefore a predictive mechanism, not merely a post-hoc visualization: it directly conditions how the LLM is adapted for each sample.} The Bayesian formulation supports principled uncertainty quantification over predictive outputs, bridging reliable diagnosis with ecosystem-level structure.

\begin{figure}[H]
    \centering
    \includegraphics[width=1\linewidth]{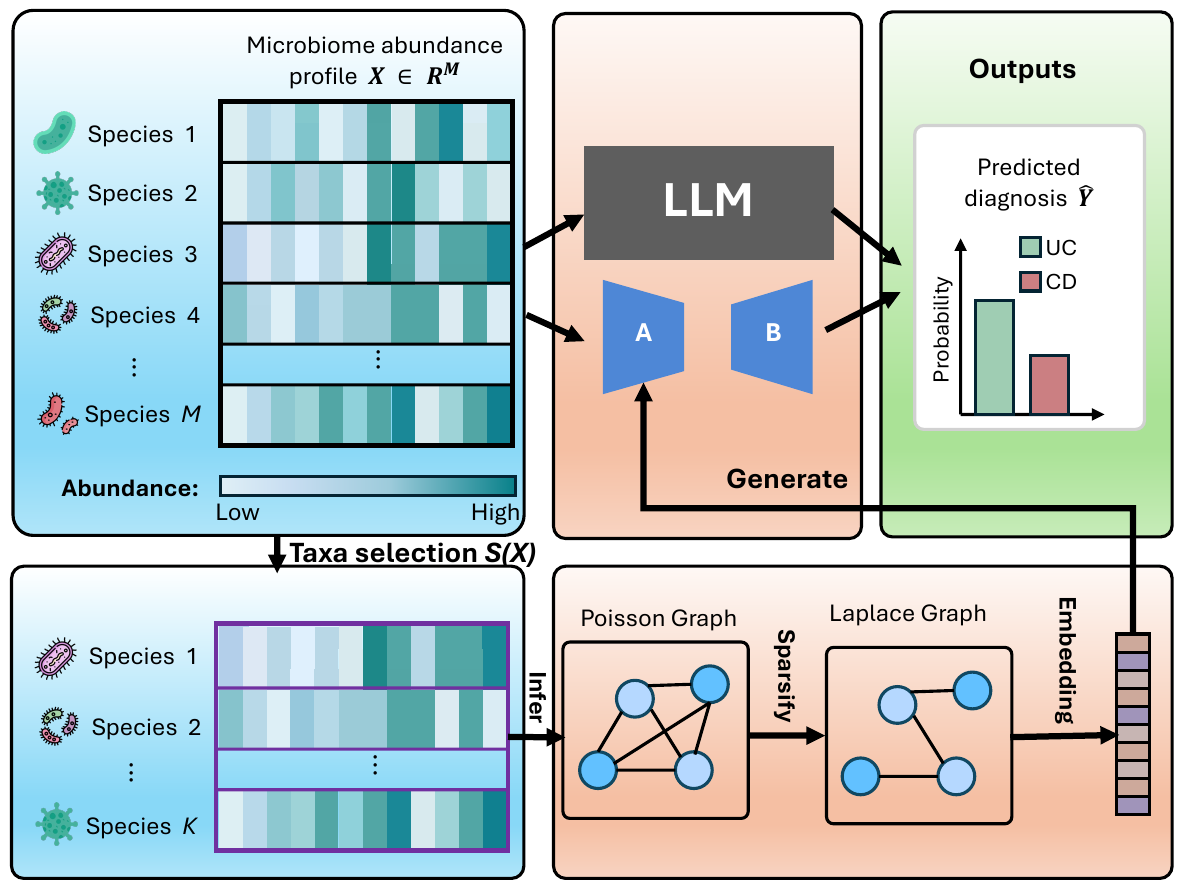} 
    \caption{\textbf{Framework overview of iLoRA.} Given a microbiome abundance profile $X\in\mathbb{R}^{M}$, our model predicts diagnosis with an LLM while, in parallel, selecting $K<M$ key taxa to infer a latent interaction graph. We first infer a Poisson edge graph, then transform it into a sparse graph with Laplace-distributed edge weights  to encourage sparsity, embed the graph with a GNN, and use the embedding to generate the LoRA matrix $A$. The outputs are the predicted diagnosis $\hat{Y}$ and a sparse interaction graph.}
    \label{fig:framework} 
\end{figure}

We evaluate iLoRA in two complementary settings. \rev{First, we use interactive question answering (QA) with human-annotated interaction graphs as a controlled benchmark for latent structure recovery, so Molweni is not merely an NLP leaderboard but a test of whether the graph branch recovers meaningful relations while improving task performance} \cite{li-etal-2020-molweni}. Second, we evaluate iLoRA on gut microbiome cohorts for IBD diagnosis, comparing against strong LoRA baselines \cite{Hu2022LoRA} and examining whether inferred graphs are consistent with conventional microbiome association networks and covariate-adjusted association analyses \cite{Weiss2016,Mallick2021MaAsLin2}. Across both domains, iLoRA is designed to improve predictive accuracy while yielding interaction graphs that are interpretable and aligned with external relational evidence.

In summary, our contributions are:
\begin{itemize}
\item  \rev{We introduce \textbf{iLoRA}\footnote{\url{https://github.com/GoodGoodMaul/iLoRA}}. To our knowledge, it is the first \textbf{Bayesian graph-conditioned LoRA framework}. It infers a \textbf{latent interaction graph} from the input and uses it to generate input-conditioned LoRA updates.}
\item  \rev{iLoRA learns \textbf{prediction and latent interaction structure jointly}, rather than using interaction analysis only post hoc. In microbiome diagnosis, this turns microbe--microbe cross-talk into a sample-conditioned adaptation signal under a single end-to-end Bayesian objective, with Poisson edge modeling and Laplace-weighted sparsification providing uncertainty-aware sparse graph learning.}
\item  \rev{We validate iLoRA in both \textbf{interactive QA with human-annotated graphs} and \textbf{real-world IBD diagnosis}, showing predictive gains and meaningful recovered interaction structure across language and biomedical settings.}
\end{itemize}


\section{Related Work}

A related line of research can be broadly viewed as \emph{mechanistic Bayesian reasoning}: methods that move beyond black-box prediction by discovering or explicitly modeling latent biological mechanisms from biomedical and biological data. Representative examples include semi-mechanistic Bayesian renewal-process models that infer latent transmission processes from noisy surveillance data \citep{bhatt2023semi}, and Bayesian phylodynamic models that reconstruct latent viral transmission dynamics from pathogen genomic surveillance data \citep{khurana2024high}. 
\rev{Recent interaction- and dynamics-based mechanistic reasoning models} further provide methodological primitives for such mechanism discovery, including BayesAgent for agentic graphical model reasoning \citep{huang2026bayesagent}, latent event-relational dynamics for EEG-based neurodegenerative classification \citep{feng2026lerd}, continuous-time Bayesian dynamics for non-stationary adaptation \citep{huang2022extrapolative}, stochastic boundary ordinary differential equations for learning unannotated event timing and dynamics \citep{huang2021strode}, deep graph random processes for latent relational graph inference \citep{huang2020deepgraph}, and recurrent Poisson process units for modeling event-count dynamics in sequential data \citep{huang2019recurrent}. Our work follows this mechanistic-AI perspective in microbiome diagnosis by treating microbe--microbe cross-talk as a latent, uncertainty-aware biological mechanism learned jointly with prediction, rather than as a post-hoc correlation artifact. We therefore organize the remaining related work around the two components that our method brings together: microbiome-based diagnosis and microbial interaction networks, followed by parameter-efficient adaptation with LLMs.

\subsection{Microbiome-Based Diagnosis and Microbial Interaction Networks}
Microbiome-based diagnosis typically learns predictors directly from abundance tables (or their engineered summaries), using statistical association testing and downstream classifiers. For IBD, recent diagnostic studies and community benchmarks demonstrate strong signal but also highlight practical challenges such as cohort shift, sparsity, and compositional effects that can degrade generalization across studies \citep{Kang2023Diagnosis,Khachatryan2023Results,Mallick2021MaAsLin2}. Most diagnosis pipelines primarily treat taxa as independent covariates, so microbial interactions are incorporated only \emph{implicitly} (e.g., through black-box predictors) or \emph{post-hoc} (e.g., building a network after training to interpret discovered taxa).

In parallel, microbial interaction networks have been widely used to interpret dysbiosis beyond marginal abundances \citep{Faust2012,Grilli2017}. Network analyses in IBD report altered topology and reorganization of connectivity patterns in key taxa \citep{Baldassano2016,Hu2023IBDnetwork}, motivating interaction-aware modeling. However, \emph{inferring} networks from sequencing data is nontrivial: correlation-based methods are sensitive to compositionality and zeros \citep{Weiss2016}, and even more robust approaches are commonly applied as separate preprocessing steps that output deterministic graphs without principled uncertainty and with limited support for biological constraints \citep{Kurtz2015SPIECEASI}. By contrast, our approach couples \emph{uncertainty-aware} sparse interaction discovery with diagnosis: we learn a probabilistic interaction structure and use it to condition the downstream decision model, rather than treating networks as a post-hoc explanatory artifact.

\subsection{Parameter-Efficient Adaptation with LLMs}
Parameter-efficient fine-tuning (PEFT) adapts large pretrained models with a small number of trainable parameters, including adapters \citep{Houlsby2019Adapters}, prompt/prefix tuning \citep{LiLiang2021PrefixTuning,Lester2021PromptTuning}, sparse parameter updates such as BitFit \citep{BenZaken2021BitFit} and IA$^3$ \citep{Liu2022IA3}, and low-rank weight updates such as LoRA \citep{Hu2022LoRA}. These methods are attractive for microbiome diagnosis because tabular datasets are often limited in size, and full fine-tuning can be unstable and computationally expensive \citep{peft}. LoRA and its variants (e.g., quantized LoRA) provide strong compute--accuracy tradeoffs by learning low-rank updates on selected linear projections \citep{Dettmers2023QLoRA}, but the learned update is typically \emph{global}---shared across all samples---and does not explicitly encode relational structure in the input.

Our work targets this gap by making PEFT \emph{structure-aware}: instead of relying solely on a static low-rank update (our LoRA baseline), \rev{iLoRA infers a latent graph for each input and uses that graph to generate the LoRA update itself. This distinguishes iLoRA from both static LoRA variants and post-hoc graph analyses: prediction and structure are learned together, and the recovered graph directly controls the adapter.} This retains the efficiency and modularity of PEFT while enabling \emph{sample-conditioned} adaptation driven by explicit microbial interactions, providing a principled bridge between interaction modeling and LLM-based diagnosis.

\section{Background: Low-Rank Adaptation (LoRA)}
Large language models are typically fine-tuned by updating all weights, which is costly and can overfit when supervision is limited. LoRA addresses this by freezing pretrained weights and learning a low-rank update for selected linear layers \citep{Hu2022LoRA}. Consider a linear transformation with weight matrix $W_0 \in \mathbb{R}^{d_{\text{out}}\times d_{\text{in}}}$. Instead of learning a full update $\Delta W$, LoRA parameterizes
\vspace{-0.55em}
\begin{equation}
W = W_0 + \Delta W, \qquad \Delta W = s \, B A,
\end{equation}
\vspace{-0.35em}
where $A \in \mathbb{R}^{r\times d_{\text{in}}}$ and $B \in \mathbb{R}^{d_{\text{out}}\times r}$ with rank $r \ll \min(d_{\text{in}}, d_{\text{out}})$, and $s$ is a scaling factor (often $s=\alpha/r$ in implementations) \citep{Hu2022LoRA}. During training, only $(A,B)$ are updated via standard backpropagation while $W_0$ remains fixed; gradients flow through $BA$ exactly as in a regular linear layer. A common initialization sets $B=\mathbf{0}$ and initializes $A$ with a small random matrix (e.g., Gaussian), so that $\Delta W=0$ at the start and the model initially matches the pretrained network \citep{Hu2022LoRA}. At inference, the low-rank update can be merged into $W_0$ (i.e., use $W_0+sBA$) without adding extra latency.

\section{Problem Formulation}
We consider supervised learning with \emph{latent interaction graphs}; as a concrete example, in microbiome-based diagnosis the interactions among taxa are typically unobserved. Given a dataset $\mathcal{D}=\{(X^{(n)},y^{(n)})\}_{n=1}^N$, each input $X^{(n)}\in\mathbb{R}^{M}$ is a microbial abundance profile over $M$ observed taxa and $y^{(n)}\in\{0,1\}$ indicates disease status. Since $M$ can be large in practice, we first apply a (possibly data-driven) filter/selection function $S(\cdot)$ to identify $K<M$ key taxa, yielding $Z^{(n)}=S(X^{(n)})\in\mathbb{R}^{K}$ and a node set $V=\{1,\dots,K\}$. For every sample $n$, we associate an unknown interaction graph $G^{(n)}=(V,E^{(n)})$ represented by an adjacency matrix $A^{(n)}\in[0,1]^{K\times K}$, where $A^{(n)}_{ij}=0$ denotes no edge and larger values indicate stronger co-occurrence (with $A^{(n)}_{ij}=1$ as maximal co-occurrence); optionally we impose $A^{(n)}_{ii}=0$ and symmetry $A^{(n)}=A^{(n)\top}$ for undirected interactions. The learning task is: given a new microbiome profile $X$, output both (i) a diagnosis prediction $\hat y\in\{0,1\}$ and (ii) an inferred latent interaction graph $\hat A\in[0,1]^{K\times K}$ over the selected $K$ taxa that annotates pairwise co-occurrence structure for that sample.

\section{Method}
\label{sec:method}

\subsection{Overview}
\label{sec:method_overview}

Given a microbiome abundance profile $X\in\mathbb{R}^{M}$, our framework (Fig.~\ref{fig:framework}) is inspired by the hierarchical Bayesian deep learning framework~\cite{huang2020deepgraph,BDL,BDLSurvey,wang2024blob} and follows a two-branch pipeline that couples \emph{Bayesian inference of latent interaction graphs} with \emph{LoRA-based LLM adaptation}. \rev{The key design choice is that the graph is not produced after prediction: it is the conditioning signal that generates the input-specific LoRA update.} 

In the \textbf{prediction branch}, we construct a lightweight prompt that encodes the taxa abundances in $X$ and feed it into a pretrained LLM to obtain a diagnosis prediction $\hat{y}$. In parallel, the \textbf{iLoRA branch} first applies a taxa selection function $S(\cdot)$ to extract $K<M$ key taxa, yielding $Z=S(X)\in\mathbb{R}^{K}$, and infers a sample-specific interaction graph over these taxa. Specifically, we first infer a \emph{Poisson interaction graph} in which each edge variable is modeled with a Poisson distribution, capturing uncertain co-occurrence strength. To encourage sparsity, we then infer a sparse graph with Laplace-distributed edge weights by probabilistically transforming the Poisson edge variables into \rev{Laplace-distributed} edge variables, yielding a sparse interaction graph $\hat{A}\in[0,1]^{K\times K}$. Finally, we embed $\hat{A}$ using a graph neural network (GNN) and use the resulting graph representation to generate the LoRA update  matrix $A$, producing \rev{a Bayesian graph-conditioned low-rank adaptation} of the LLM. The model outputs both the predicted diagnosis $\hat{y}$ and the inferred sparse interaction graph $\hat{A}$ as an estimation of microbe--microbe co-occurrence structure.

\subsection{Inferring a Poisson Graph from Multi-Entity Data}
\label{sec:method_poisson_graph}
We represent multi-entity observations by a \emph{Poisson interaction graph}: for a sample with $K$ selected entities (e.g., taxa in a microbiome profile), we introduce a latent graph whose edge $(i,j)$ is a nonnegative random variable $\tilde{\alpha}_{ij}\sim\mathrm{Pois}(m_{ij})$, where $m_{ij}$ quantifies the (sample-specific) strength of co-occurrence between entities $i$ and $j$. In microbiome-based diagnosis, this provides a compact probabilistic abstraction of uncertain microbe--microbe interactions beyond marginal abundances, and serves as the latent structure inferred by the iLoRA branch.

\paragraph{Node representations and edge-wise latent variables.}
Given an input profile $X\in\mathbb{R}^{M}$, we first select $K<M$ key taxa via $Z=S(X)$, and denote the selected taxa names by $\{t_i\}_{i=1}^K$ with corresponding (normalized) abundances $\{z_i\}_{i=1}^K$. We obtain a node representation for each selected taxon by prompting the (frozen) LLM to encode the pair \emph{(taxon name, abundance)}:
\[
h_i=\mathrm{Enc}_{\text{LLM}}(t_i,z_i)\in\mathbb{R}^{d},\qquad i=1,\dots,K,
\]
where $\mathrm{Enc}_{\text{LLM}}(\cdot)$ denotes the resulting embedding (e.g., a hidden state from the LLM). For each unordered pair $(i,j)$, we then construct an edge feature vector
\[
e_{ij}=\mathrm{MLP}_\phi\!\big([h_i;h_j;|h_i-h_j|;h_i\odot h_j]\big),
\]
which parameterizes the edge-wise latent variable $\tilde{\alpha}_{ij}$ in the Poisson interaction graph (and later its sparsity-inducing transformation). Collecting $\{\tilde{\alpha}_{ij}\}$ for all $i<j$ yields a sample-specific latent interaction structure over the selected $K$ taxa.

\paragraph{Learning: ELBO for Poisson Graph.}
Let $\tilde{\alpha}=\{\tilde{\alpha}_{ij}\}_{i<j}$ denote Poisson edge variables for the $K$ selected taxa. In the iLoRA branch, we learn a variational posterior $q_\varphi(\tilde{\alpha}\mid Z)$ by maximizing the evidence lower bound (ELBO):
\begin{equation}
\begin{aligned}
\log p_\theta(y\mid X)
\;\ge\;&
\mathbb{E}_{q_\varphi(\tilde{\alpha}\mid Z)}
\!\left[\log p_\theta(y\mid X,\tilde{\alpha})\right] \\
&\;-\;
\mathrm{KL}\!\left(
q_\varphi(\tilde{\alpha}\mid Z)
\;\|\;
p(\tilde{\alpha}\mid Z)
\right).
\end{aligned}
\label{eq:elbo_pois}
\end{equation}
Directly using a Poisson variational family is inconvenient for end-to-end learning because Poisson samples are discrete and do not admit a standard low-variance reparameterization, making gradient-based optimization and Monte Carlo estimation of the first term in \eqref{eq:elbo_pois} intractable. To obtain a differentiable pathwise estimator while retaining a Poisson interpretation, we introduce a \emph{Gaussian proxy} for Poisson edges.

\begin{theorem}[Gaussian proxy for Poisson and closed-form rate matching]
\label{thm:gauss_proxy_pois}
Fix an edge $(i,j)$. Let the variational approximation be Gaussian,
\begin{equation}
q_\varphi(\tilde{\alpha}_{ij}\mid Z)\ \triangleq\ \mathcal{N}(u_{ij},\delta_{ij}^2),
\qquad \delta_{ij}>0,
\end{equation}
and use the Gaussian proxy for a Poisson random variable with rate $m>0$, $\mathcal{N}(m,m)$.
Define the matched Poisson rate $m_{ij}$ as the minimizer
\begin{equation}
m_{ij}\ \triangleq\ \arg\min_{m>0}\ \mathrm{KL}\!\Big(\mathcal{N}(m,m)\ \big\|\ \mathcal{N}(u_{ij},\delta_{ij}^2)\Big).
\end{equation}
Then the unique positive minimizer is
\begin{equation}
\label{eq:m_closed_form}
m_{ij}
=
\rev{\frac{2u_{ij}-1+\sqrt{(2u_{ij}-1)^2+8\delta_{ij}^2}}{4}}
\;>\;0.
\end{equation}
\end{theorem}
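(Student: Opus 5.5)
The plan is to write the Gaussian–Gaussian KL divergence in closed form as a function of the single scalar $m>0$, differentiate, and solve the resulting first-order condition. For $p=\mathcal{N}(\mu_1,\sigma_1^2)$ and $q=\mathcal{N}(\mu_2,\sigma_2^2)$ we have $\mathrm{KL}(p\|q)=\log(\sigma_2/\sigma_1)+\frac{\sigma_1^2+(\mu_1-\mu_2)^2}{2\sigma_2^2}-\frac12$. Substituting $\mu_1=m$, $\sigma_1^2=m$, $\mu_2=u_{ij}$ and $\sigma_2^2=\delta_{ij}^2$ gives
\[
f(m)=\log\delta_{ij}-\tfrac12\log m+\frac{m+(m-u_{ij})^2}{2\delta_{ij}^2}-\tfrac12 .
\]

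Differentiating gives $f'(m)=-\frac{1}{2m}+\frac{1+2(m-u_{ij})}{2\delta_{ij}^2}$. Multiplying by $2m\delta_{ij}^2>0$ shows that $f'(m)=0$ on $m>0$ is equivalent to the quadratic
\[
2m^2+(1-2u_{ij})m-\delta_{ij}^2=0 .
\]
Its roots are $m=\frac{2u_{ij}-1\pm\sqrt{(2u_{ij}-1)^2+8\delta_{ij}^2}}{4}$. The product of the roots is $-\delta_{ij}^2/2<0$, so exactly one root is positive, namely the one with the $+$ sign. This is \eqref{eq:m_closed_form}. Its positivity also follows directly, since $\sqrt{(2u_{ij}-1)^2+8\delta_{ij}^2}>|2u_{ij}-1|$ because $\delta_{ij}>0$.

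To show that this unique stationary point is the unique global minimizer, I would use convexity. We have $f''(m)=\frac{1}{2m^2}+\frac{1}{\delta_{ij}^2}>0$ on $(0,\infty)$, so $f$ is strictly convex there. Alternatively, one can check the boundary behavior: $f(m)\to+\infty$ both as $m\to0^+$ (from the $-\tfrac12\log m$ term) and as $m\to\infty$ (from the quadratic term). Either argument gives existence and uniqueness of the minimizer.

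The only step needing care is the sign and root-selection argument, that is, making sure the negative root is excluded for every real $u_{ij}$, including $u_{ij}\le 1/2$. The product-of-roots observation settles this uniformly, so I do not expect a genuine obstacle; the proof is a short calculus exercise.
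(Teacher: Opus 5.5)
Your proposal is correct and follows the paper's proof essentially step for step: the closed-form Gaussian KL, strict convexity via $f''(m)=\frac{1}{2m^2}+\frac{1}{\delta_{ij}^2}>0$, the first-order condition reducing to $2m^2+(1-2u_{ij})m-\delta_{ij}^2=0$, and selection of the positive root. Your product-of-roots argument for excluding the negative root is a slightly cleaner alternative to the paper's observation that $\sqrt{(2u_{ij}-1)^2+8\delta_{ij}^2}>|2u_{ij}-1|$, which you also give.
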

The proof is provided in the appendix. \rev{In practice, we sample the Gaussian variational edge as $\tilde{\alpha}_{ij}=u_{ij}+\delta_{ij}\epsilon_{ij}$ with $\epsilon_{ij}\sim\mathcal{N}(0,1)$ and use the matched rate $m_{ij}$ in the Poisson-rate KL term. This preserves differentiable pathwise training while retaining the Poisson interpretation of edge intensity.}


\paragraph{Poisson prior and KL term.}
We place an input-dependent Poisson prior on each edge,
\begin{equation}
\begin{aligned}
p(\tilde{\alpha}_{ij}\mid Z)
&= \mathrm{Pois}\!\left(m^{(0)}_{ij}\right), \\
m^{(0)}_{ij}
&= \mathrm{Softplus}\!\left(f_{\phi_0}(e_{ij})\right).
\end{aligned}
\label{eq:pois_prior}
\end{equation}
where $f_{\phi_0}$ is a lightweight network (in the spirit of amortized priors used in latent-sequence models such as VRNN\cite{chung2015recurrent}  parameterizations). With the Poisson rate $m_{ij}$ recovered from \eqref{eq:m_closed_form}, the KL regularizer becomes the closed-form Poisson--Poisson divergence:

\begin{equation}
\begin{aligned}
\mathrm{KL}\!\Big(\mathrm{Pois}(m_{ij})\ \|\ \mathrm{Pois}(m^{(0)}_{ij})\Big)
&= m^{(0)}_{ij}-m_{ij} \\
&\quad + m_{ij}\log\frac{m_{ij}}{m^{(0)}_{ij}}.
\end{aligned}
\label{eq:kl_pois_pois}
\end{equation}
Summing \eqref{eq:kl_pois_pois} over all $i<j$ yields the graph regularization term in the ELBO \eqref{eq:elbo_pois}.

\subsection{From Poisson Edges to a Sparse Laplace-Weighted Graph}
\label{sec:pois_to_laplace}

The Poisson interaction graph captures \emph{ nonnegative} co-occurrence strengths, but in many scientific graphs we also desire \emph{sparsity}---most pairs of taxa should have negligible interaction. To obtain a sparse and signed interaction graph, we transform the Poisson-edge latent variables into \rev{\emph{Laplace-distributed}} edge variables, which impose a sharp peak at zero and heavy tails.

\paragraph{Target sparse edge distribution.}
For each unordered pair $(i,j)$, we introduce a sparse edge weight
\begin{equation}
\bar{\alpha}_{ij}\ \sim\ \mathrm{Laplace}(0,b_{ij}),
\label{eq:laplace_edge}
\end{equation}
where the scale $b_{ij}>0$ controls sparsity (smaller $b_{ij}$ encourages stronger shrinkage toward $0$). Collecting $\{\bar{\alpha}_{ij}\}$ yields the \rev{Laplace-weighted sparse interaction graph} used by the downstream GNN/LoRA modules.

\paragraph{\rev{NPN transform: Poisson $\rightarrow$ Laplace.}}
\rev{Inspired by natural-parameter networks (NPNs)}~\citep{wang2016natural}, we adopt neural networks to infer a target distribution from an input distribution through a sequence of probabilistic, sampling-free transformations. Concretely, for each edge $(i,j)$ we start from the inferred Poisson edge variable
\begin{equation}
\tilde{\alpha}_{ij}\sim \mathrm{Pois}(m_{ij}),
\end{equation}
and apply an NPN mapping that outputs the parameters of a \rev{Laplace edge distribution}:

\begin{equation}
\begin{aligned}
(\mu_{ij}, b_{ij})
&= \mathcal{T}_{\textsc{npn}}\!\left(\tilde{\alpha}_{ij},\, e_{ij}\right), \\
\bar{\alpha}_{ij}
&\sim \mathrm{Laplace}\!\left(\mu_{ij},\, b_{ij}\right).
\end{aligned}
\label{eq:npn_map}
\end{equation}
In our implementation we set $\mu_{ij}=0$ to center edges at zero and interpret $b_{ij}$ as a learned, edge-specific sparsity level. The role of the NPN is to propagate uncertainty from the Poisson graph into a \rev{Laplace family} \emph{without requiring discrete sampling} from $\mathrm{Pois}(m_{ij})$ during training.

\paragraph{Scale-mixing view enabling efficient inference.}
We exploit the standard Gaussian scale-mixture representation of the Laplace distribution: a Laplace random variable can be obtained by a Gaussian whose variance is randomly scaled. Specifically, for $\bar{\alpha}_{ij}\sim \mathrm{Laplace}(0,b_{ij})$, one convenient construction is
\begin{equation}
\bar{\alpha}_{ij}\mid \sigma_{ij}^2 \sim \mathcal{N}(0,\sigma_{ij}^2),
\qquad 
\sigma_{ij}\sim \mathrm{Rayleigh}(b_{ij}),
\label{eq:laplace_mixture}
\end{equation}
which yields the marginal $\bar{\alpha}_{ij}\sim \mathrm{Laplace}(0,b_{ij})$.
This representation allows the NPN to operate on continuous natural parameters (e.g., mapping uncertainty in $\tilde{\alpha}_{ij}$ to a distribution over $\sigma_{ij}$ and hence $b_{ij}$), while keeping training compatible with Gaussian-based backpropagation used in the Poisson stage.

\subsection{Bayesian-calibrated prediction}
\label{sec:bayes_calib_pred}

The iLoRA branch yields a \emph{posterior} over sparse interaction graphs through \rev{Laplace edge variables}, which naturally supports uncertainty-aware prediction. Let $\bar{A}$ denote the \rev{Laplace-weighted interaction graph} and let $p_\varphi(\bar{A}\mid X)$ be its (amortized) posterior implied by our \rev{Poisson$\rightarrow$Laplace construction}. We define the Bayesian predictive distribution by marginalizing graph uncertainty:
\begin{equation}
p(y\mid X)
\;=\;
\mathbb{E}_{\bar{A}\sim p_\varphi(\bar{A}\mid X)}\!\big[p_\theta(y\mid X,\bar{A})\big].
\label{eq:bayes_pred}
\end{equation}
In practice we approximate \eqref{eq:bayes_pred} with Monte Carlo samples $\{\bar{A}^{(s)}\}_{s=1}^S$ drawn from the \rev{Laplace-weighted graph posterior},
\begin{equation}
\hat{p}(y\mid X)
\;=\;
\frac{1}{S}\sum_{s=1}^{S} p_\theta\!\left(y\mid X,\bar{A}^{(s)}\right),
\label{eq:mc_pred}
\end{equation}
which averages predictions from input-conditioned LoRA adaptations induced by different plausible graphs.

\subsection{Learning objective}
\label{sec:method_objective}
We train iLoRA end-to-end with variational inference:

\begin{equation}
\begin{aligned}
\mathcal{L}
&= \mathcal{L}_{\mathrm{pred}} \\
&\quad +
\lambda_{\mathrm{Pois}}
\sum_{i<j}
\mathrm{KL}\!\left(
\mathrm{Pois}(m_{ij})
\,\middle\|\,
\mathrm{Pois}(m^{(0)}_{ij})
\right) \\
&\quad +
\lambda_{\mathrm{Lap}}
\sum_{i<j}
\mathrm{KL}\!\left(
\mathrm{Lap}(0,b_{ij})
\,\middle\|\,
\mathrm{Lap}(0,b_0)
\right).
\end{aligned}
\label{eq:total_loss}
\end{equation}
where $\mathcal{L}_{\mathrm{pred}}$ can be cross-entropy (or token NLL for a next-token classifier), and $b_0$ is a fixed prior Laplace scale.
For zero-mean Laplace distributions, the KL reduces to
\begin{equation}
\mathrm{KL}\!\Big(\mathrm{Lap}(0,b_{ij})\ \|\ \mathrm{Lap}(0,b_0)\Big)
=
\log\frac{b_0}{b_{ij}} + \frac{b_{ij}}{b_0} - 1,
\label{eq:kl_laplace}
\end{equation}
and the Poisson KL is given in Eq.~\eqref{eq:kl_pois_pois}.


\section{Experiments}
\label{sec:experiments}

We empirically validate iLoRA on two datasets designed to test distinct aspects of our framework: structural inference and diagnostic accuracy. \rev{First, we employ the Molweni dataset~\citep{li-etal-2020-molweni} as a controlled benchmark for latent graph recovery, because its human-annotated discourse dependencies provide an external structural target rather than only an NLP score.} Second, we apply our method to a large-scale, heterogeneous IBD microbiome cohort to evaluate its utility in precision medicine. 

\subsection{Experimental Setup}

\subsubsection{Datasets}
\paragraph{Multiparty Dialogue (Molweni).} To evaluate structural recovery, we utilize the Molweni dataset, which consists of multiparty dialogues annotated with discourse dependency structures. We treat each utterance as a node, allowing the model to infer latent graphs representing discourse relationships while performing machine reading comprehension. \rev{Thus, Molweni evaluates whether iLoRA recovers meaningful latent relations jointly with QA, not merely whether it improves a leaderboard metric.}

\paragraph{IBD Diagnosis.} We aggregated microbiome profiles from multiple publicly available independent cohorts to form a unified heterogeneous dataset. Key sources include Ananthakrishnan\_2017, Franzosa\_2019, and Lloyd-Price\_2019, among others. A comprehensive list of all included cohorts, along with their specific bibliographic references, is provided in Appendix~\ref{app:datasets}. The unified microbiome representation consists of 3,061 species-level microbial taxa as input features. After restricting to patients with consistent UC/CD labels, the binary diagnosis subset contains 1,014 patient samples, each represented by these species-level features. 
We focus on the binary classification of \textbf{Ulcerative Colitis (UC) vs. Crohn's Disease (CD)}. To ensure rigorous evaluation, the dataset was partitioned into training, validation, and test sets with a ratio of approximately $7:1.5:1.5$ (710 training samples, 152 validation, 152 test). Crucially, we performed \textit{stratified splitting at the cohort level} to guarantee that the test split contains samples from diverse study populations, testing the model's generalization capability.

\begin{table}[htbp]
    \centering
    \caption{Results on Molweni (Span Extraction). Best results are in \textbf{bold}.}
    \label{tab:molweni_results}
    \begin{tabular}{l c c}
        \toprule
        \textbf{Method} & \textbf{F1} & \textbf{EM} \\
        \midrule
        Zero-shot & 56.32 & 35.70 \\
        \midrule
        MLE & 72.83 & 57.78 \\
        MAP & 72.66 & 57.02 \\
        \midrule
        BLOB & 70.80 & 55.90 \\
        MCD & 72.33 & 57.51 \\
        ENS & 72.38 & 57.09 \\
        \midrule
        \rowcolor{gray!10} \textbf{iLoRA (Ours)} & \textbf{74.51} & \textbf{60.57} \\
        \bottomrule
    \end{tabular}
\end{table}

Feature selection was performed using \textbf{MaAsLin2}~\citep{Mallick2021MaAsLin2}, identifying the top 20 significant species based on FDR-adjusted $q$-values (detailed feature list in Appendix Table~\ref{tab:maaslin2_features}). 

\subsubsection{Task Formulation}
We formulate the learning objectives as follows:
\begin{itemize}
    \item \textbf{Molweni (Span Extraction):} We treat the task as autoregressive generation. The model is instructed to extract the minimal text span answering a query and output it in a structured JSON format. 
    \item \textbf{IBD Diagnosis (Next-Token Prediction):} We reformulate the classification as a next-token prediction problem. The LLM acts as a domain expert, receiving the full list of non-zero microbial species and prompted to generate exactly one token: ``yes'' (UC) or ``no'' (CD). 
\end{itemize}
The exact prompt templates for both tasks are provided in Table~\ref{tab:prompts} of the Appendix. Detailed definitions of evaluation metrics and checkpoint selection criteria are provided in Appendix~\ref{app:metrics}.

\subsubsection{Baselines}

We compare iLoRA against standard PEFT methods (\textbf{MLE}, \textbf{MAP}) and state-of-the-art uncertainty-aware adaptations. Specifically, we evaluate \textbf{Monte-Carlo Dropout (MCD)}~\citep{pmlr-v48-gal16}, which approximates the posterior via dropout during inference; \textbf{Deep Ensemble (ENS)}~\citep{NIPS2017_9ef2ed4b, balabanov2025uncertaintyquantificationfinetunedllms, wang2023loraensembleslargelanguage}, which aggregates predictions from multiple independently trained LoRA adapters; \textbf{BLOB}~\citep{wang2024blob}, a variational inference approach using backpropagation for Bayesian LoRA; and, where applicable, \textbf{Laplace LoRA (LAP)}~\citep{yang2024bayesian}, which applies a post-hoc Laplace approximation to the parameter posterior. LAP is evaluated only for the IBD next-token diagnosis task. We do not report LAP on Molweni because LAP is specifically designed for closed-ended prediction rather than autoregressive span generation; see Appendix~\ref{app:implementation} for details.

\begin{table}[t]
    \centering
    \caption{Relation-prediction error rates for the structural benchmarks.}
    \label{tab:graph_error_rate}
    \label{tab:top10_erro}
    \footnotesize
    \setlength{\tabcolsep}{3.0pt}
    \renewcommand{\arraystretch}{1.05}
    \begin{tabular*}{\linewidth}{@{\extracolsep{\fill}}llc}
    \toprule
    \textbf{Benchmark} & \textbf{Graph Type} & \textbf{Error Rate} \\
    \midrule
    Molweni & Random Graph & 50.0 \\
    Molweni & \textbf{Inferred Graph (iLoRA)} & \textbf{26.7} \\
    IBD & Random Graph & 50.0 \\
    IBD & \textbf{Inferred Graph (iLoRA)} & \textbf{27.3} \\
    \bottomrule
    \end{tabular*}
\end{table}

\paragraph{Implementation Details.} 
We implement the iLoRA module as a graph hypernetwork where a structural graph encoder dynamically generates the LoRA $A$ matrix based on the latent interaction graph, while the $B$ matrix remains static across samples. Detailed neural architectures, tensor shapes, and the complete training protocol are provided in Appendix~\ref{sec:arch-train-details} and \ref{app:implementation}.

\subsection{Experiment I: Multiparty Dialogue and Structure Recovery}
\label{sec:exp_molweni}

We first evaluate the model's performance on the Molweni dataset, focusing on both the downstream span extraction task and the structural quality of the learned latent graphs. \rev{We report these two outcomes jointly because the central claim is not only higher QA accuracy, but prediction that is coupled to a recoverable latent interaction structure.}

\subsubsection{Quantitative Results: Downstream Accuracy and Structural Recovery}
Table~\ref{tab:molweni_results} summarizes the performance on the machine reading comprehension task. iLoRA achieves a state-of-the-art F1 score of \textbf{74.51\%} and an Exact Match (EM) score of \textbf{60.57\%}.

\textbf{Discussion.} iLoRA significantly outperforms both the standard fine-tuning baselines (MLE, MAP) and uncertainty-aware methods. In multiparty dialogue, the correct answer often hinges on the interaction history between specific speakers rather than the linear text sequence. 
This pattern is consistent with the intended role of the latent graph: it helps the model emphasize structurally relevant utterances and reduce reliance on irrelevant conversational context.
Notably, iLoRA surpasses Deep Ensemble (ENS) without the computational overhead of maintaining multiple models, validating the efficiency of our latent graph approach. \rev{Read together with Table~\ref{tab:graph_error_rate}, these gains show that iLoRA improves QA while reducing graph error from 50.0\% to 26.7\%, supporting Molweni as a controlled structure-recovery benchmark.}

\paragraph{Structural Error Analysis.}
To explicitly validate the structural recovery capabilities of iLoRA, we measured the \textit{Error Rate} of the inferred adjacency matrices compared to the ground-truth connectivity. As shown in Table~\ref{tab:graph_error_rate}, a random graph baseline yields an error rate of \textbf{50.0\%}. In contrast, our proposed iLoRA framework achieves a significantly lower error rate of \textbf{26.7\%}. This substantial reduction indicates that \textbf{iLoRA} successfully captures meaningful discourse dependencies rather than relying on random associations.


\subsubsection{Qualitative Case Study: Visualization of Latent Graphs}

To intuitively demonstrate the properties of the learned structures, we visualize the inferred interaction graphs for two representative samples. Detailed dialogue content for these cases is provided in \textbf{Appendix~\ref{app:case_studies}}.

\paragraph{Disentangling Interleaved Conversations.}
Figure~\ref{fig:case_studies} (\textbf{top}) shows the inferred interaction graph for Sample 1371. The dialogue contains two semantically distinct threads: one discussing software compilation (U0--U2) and another regarding hardware driver issues (U3--U7). The inferred graph \rev{closely matches} the block-diagonal structure, effectively separating the two threads. Specifically, in terms of \emph{Cluster Identification}, the model predicts high connectivity within the first group (U0--U2) and the second group (U3--U7), mirroring the ground truth. Regarding \emph{Noise Suppression}, the model predicts zero interactions between the unrelated threads (e.g., U0 vs.\ U4),  suggesting that iLoRA selectively utilizes the latent graph to focus on relevant context.

\paragraph{Recovering Discourse Chains.}
Figure~\ref{fig:case_studies} (\textbf{bottom}) illustrates Sample 2568, a troubleshooting session characterized by a sequence of clarifications. iLoRA successfully recovers key discourse dependencies. For \emph{Question--Answer Identification}, the model correctly predicts the strong interaction between the question in U5 and the answer in U6. Furthermore, regarding \emph{Contextual Flow}, the graph \rev{aligns with} the chain of interaction from U3 to U4 and subsequently to U5, matching the logical flow of the troubleshooting steps.

\begin{figure}[!t]
    \centering
    \includegraphics[width=0.82\linewidth]{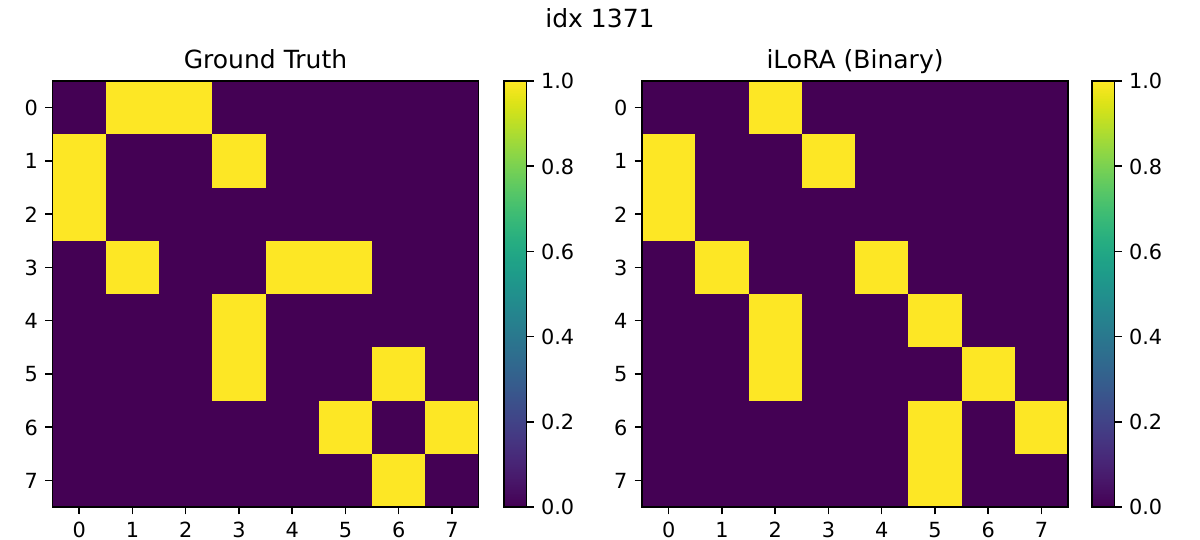}

    \vspace{0.2em}

    \includegraphics[width=0.82\linewidth]{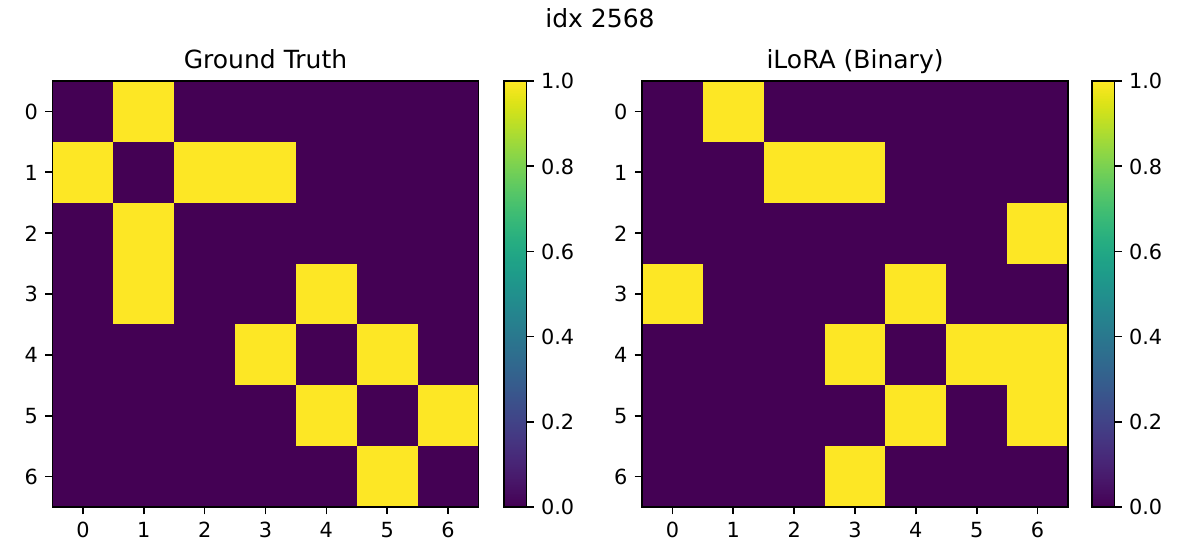}

    \caption{Visualization of latent interaction graphs. Left: Ground-truth adjacency matrix. Right: Inferred graph by iLoRA. The top example (Sample 1371) shows thread disentanglement, while the bottom (Sample 2568) shows discourse chain recovery.}

    \label{fig:case_studies}
\end{figure}

\begin{table}[t]
    \centering
    \caption{Overall performance on IBD diagnosis (UC vs. CD). Best metrics are highlighted in \textbf{bold}.}
    \label{tab:ibd_overall}
    \footnotesize
    \setlength{\tabcolsep}{2.0pt}
    \renewcommand{\arraystretch}{1.05}
    \begin{tabular*}{\linewidth}{@{\extracolsep{\fill}}lcccc}
        \toprule
        \textbf{Method} & \textbf{ECE} $\downarrow$ & \textbf{F1 (UC)} $\uparrow$ & \textbf{AUROC} $\uparrow$ & \textbf{AUPRC} $\uparrow$ \\
        \midrule
        MLE & 0.2533 & 0.6071 & 0.7617 & 0.7570 \\
        MAP & 0.2082 & 0.6496 & 0.7637 & 0.7117 \\
        \midrule
        MCD & 0.2762 & 0.6341 & 0.7428 & 0.7117 \\
        ENS & 0.1598 & 0.5794 & 0.7574 & 0.7565 \\
        BLOB & 0.1570 & 0.5882 & 0.7812 & 0.7577 \\
        LAP & 0.2031 & 0.6496 & 0.7641 & 0.7122 \\
        \midrule
        \rowcolor{gray!10} \textbf{iLoRA (Ours)} & \textbf{0.0980} & \textbf{0.6557} & \textbf{0.7990} & \textbf{0.7617} \\
        \bottomrule
    \end{tabular*}
\end{table}

\subsection{Experiment II: IBD Diagnosis and Interaction Discovery}
\label{sec:exp_ibd}

\subsubsection{Diagnostic Performance and Calibration}
Table~\ref{tab:ibd_overall} presents the aggregated results for the UC vs. CD classification task. iLoRA demonstrates superior diagnostic capability, achieving the highest \textbf{AUROC (0.7990)} and \textbf{AUPRC (0.7617)} among all evaluated methods. These metrics underscore the model's robustness in distinguishing between disease subtypes and its effectiveness in maintaining high precision across varying decision thresholds, which is essential for minimizing false positives in clinical screening.

\noindent\textbf{Discussion.} \rev{iLoRA shows three complementary advantages. \emph{Discriminative power:} it surpasses strong uncertainty-aware baselines in ranking performance, outperforming BLOB (0.7812) and Deep Ensemble (0.7574) in AUROC, which suggests that latent microbial interactions provide a richer signal for separating complex disease phenotypes than strictly weight-space ensembles. \emph{Calibration:} iLoRA achieves an ECE of \textbf{0.0980}, substantially lower than MLE (0.2533) and MAP (0.2082), producing probability estimates better aligned with empirical accuracy. \emph{Class balance:} iLoRA also obtains the highest UC F1-score (0.6557), indicating that the gain is not limited to threshold-free ranking but also improves subtype identification at the operating point. The result supports the graph-conditioned design: the inferred interaction structure is used to adapt the model, rather than only to explain it after training.}

We further validate robustness by analyzing performance across eight independent cohorts (see Appendix Table~\ref{tab:ibd_full_breakdown}). iLoRA shows strong generalization, particularly on major cohorts like \textit{Franzosa\_2019B} (AUROC 0.9500) and \textit{Lee\_2021} (AUROC 0.8611).


\paragraph{Component ablation.}
We next examine which components of the latent graph branch contribute to the IBD diagnosis results. 
Table~\ref{tab:ibd_component_ablation} compares vanilla LoRA, a Poisson-only variant that removes Laplace sparsification, and the full iLoRA model. 
The Poisson graph substantially improves calibration over vanilla LoRA, reducing ECE from $0.2533$ to $0.1032$. 
Adding Laplace sparsification further improves discriminative performance, increasing AUROC from $0.7557$ to $0.7990$ and AUPRC from $0.7440$ to $0.7617$. 
This suggests that the Poisson stage helps represent uncertainty in latent interactions, while the Laplace sparsity prior suppresses noisy edges and yields a more task-relevant interaction structure.
\rev{Vanilla LoRA serves as the no-graph-conditioned reference. We therefore ablate components that can be removed while preserving graph-conditioned adaptation; removing the GNN/hypernetwork graph-to-LoRA interface would collapse the method toward a qualitatively different non-graph-conditioned adapter rather than isolate one optional component.}

\begin{table}[t]
\centering
\caption{Ablation study on iLoRA. The iLoRA (w/o Laplace) keeps the latent Poisson graph branch but removes Laplace sparsification.}
\label{tab:ibd_component_ablation}
\footnotesize
\setlength{\tabcolsep}{1.8pt}
\renewcommand{\arraystretch}{1.05}
\begin{tabular*}{\linewidth}{@{\extracolsep{\fill}}lcccc}
\toprule
Variant & ECE $\downarrow$ & F1 (UC) $\uparrow$ & AUROC $\uparrow$ & AUPRC $\uparrow$ \\
\midrule
MLE (vanilla LoRA) & 0.2533 & 0.6071 & 0.7617 & 0.7570 \\
iLoRA (w/o Laplace) & 0.1032 & 0.6341 & 0.7557 & 0.7440 \\
iLoRA (full) & \textbf{0.0980} & \textbf{0.6557} & \textbf{0.7990} & \textbf{0.7617} \\
\bottomrule
\end{tabular*}
\end{table}

\begin{table}[t]
\caption{Comparison with standard tabular baselines on IBD diagnosis using the same selected taxa.}
\label{tab:ibd_tabular_baselines}
\centering
\small
\setlength{\tabcolsep}{2.0pt}
\renewcommand{\arraystretch}{1.05}
\begin{tabular*}{\linewidth}{@{\extracolsep{\fill}}lccc}
\toprule
Model & F1 (UC) $\uparrow$ & AUROC $\uparrow$ & AUPRC $\uparrow$ \\
\midrule
Random Forest & 0.5753 & 0.6151 & 0.6467 \\
XGBoost & 0.5292 & 0.5823 & 0.6467 \\
MLP & 0.4906 & 0.5346 & 0.6214 \\
iLoRA & \textbf{0.6557} & \textbf{0.7990} & \textbf{0.7617} \\
\bottomrule
\end{tabular*}
\end{table}

\paragraph{Comparison with tabular baselines.}
Because the IBD input is ultimately derived from microbial abundance features, we also compare against standard non-LLM tabular baselines trained on the same selected taxa. 
As shown in Table~\ref{tab:ibd_tabular_baselines}, iLoRA outperforms Random Forest~\citep{breiman2001random}, XGBoost~\citep{10.1145/2939672.2939785}, and MLP~\citep{rumelhart1986learning} baselines by a large margin in AUROC and AUPRC. 
These results indicate that the gains are not simply an artifact of the selected feature set, but arise from the interaction-aware adaptation mechanism.


\subsubsection{Statistical Taxa Association Reference}
\label{sec:sta}

In this section, we construct a \emph{model-agnostic cohort-level statistical reference} using the same \(n=152\) samples and \(p=20\) taxa as iLoRA, to characterize taxon--taxon relationships supported by standard cohort-level statistics and to assess how iLoRA's \emph{sample-conditioned} interaction graphs agree with and complement these global patterns. \rev{This is a structural reference for learned edges; predictive comparisons are reported separately against LoRA, Bayesian adaptation, and tabular baselines.} We define significant taxon-pair sets from two perspectives: (i) microbe--microbe co-variation in a compositionally corrected space, capturing stable co-occurrence or mutual exclusion patterns; and (ii) log-ratio pairwise features tested against the diagnostic label \(y\) (CD vs.\ UC), yielding phenotype-associated contrasts. Pooling across these two perspectives yields a cohort-level reference set of 41 significant taxon pairs. These reference sets serve as an external baseline to quantify enrichment of iLoRA's high-weight edges and to interpret additional structure beyond cohort-level marginal tests.

\begin{figure}[H]
    \centering
    \vspace{-0.5em}
    \begin{subfigure}{0.49\linewidth}
        \centering
        \includegraphics[width=\linewidth]{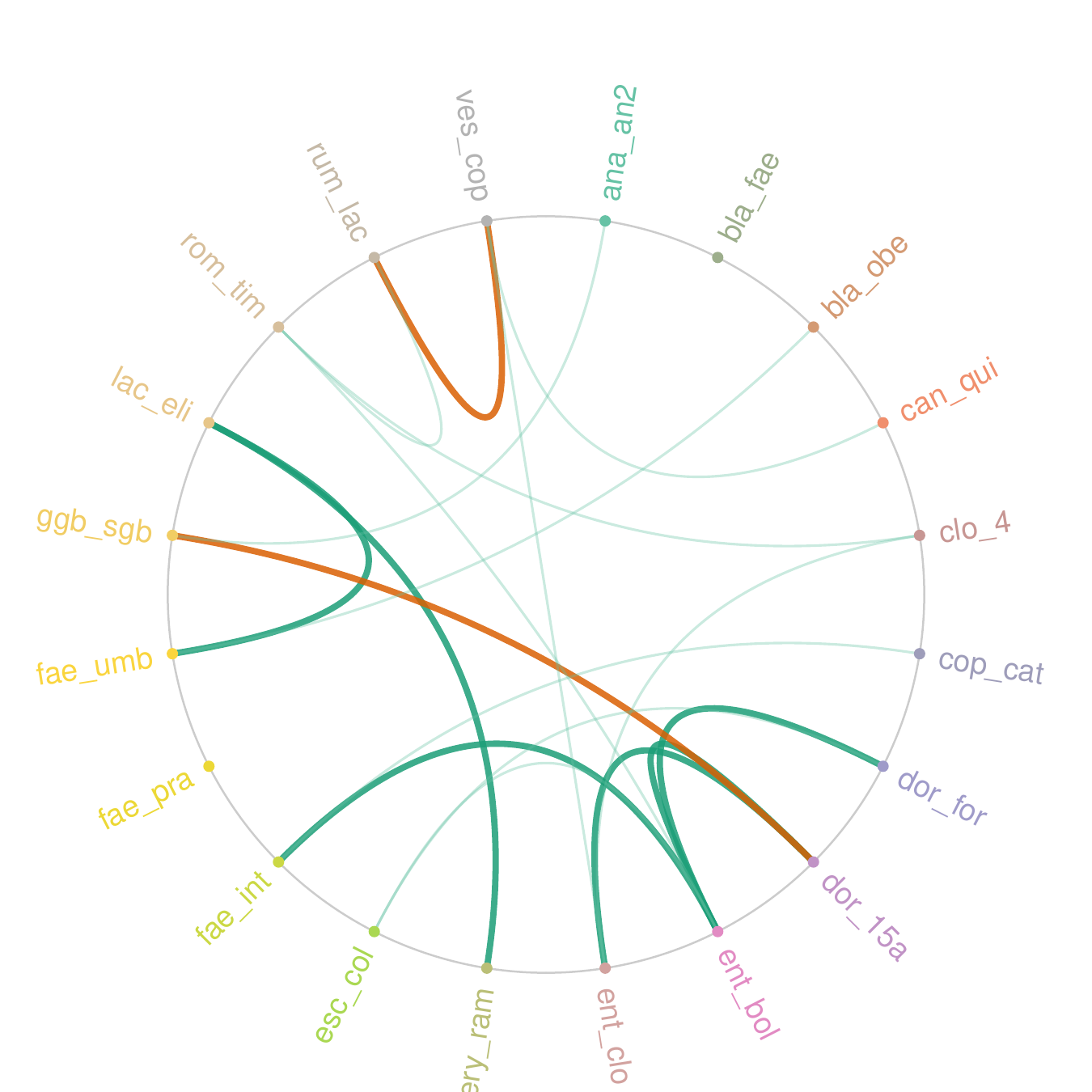}
        \caption{pub\_S402 $\mid$ Kumbhari\_2024}
        \label{fig:sta1}
    \end{subfigure}
    \hfill
    \begin{subfigure}{0.49\linewidth}
        \centering
        \includegraphics[width=\linewidth]{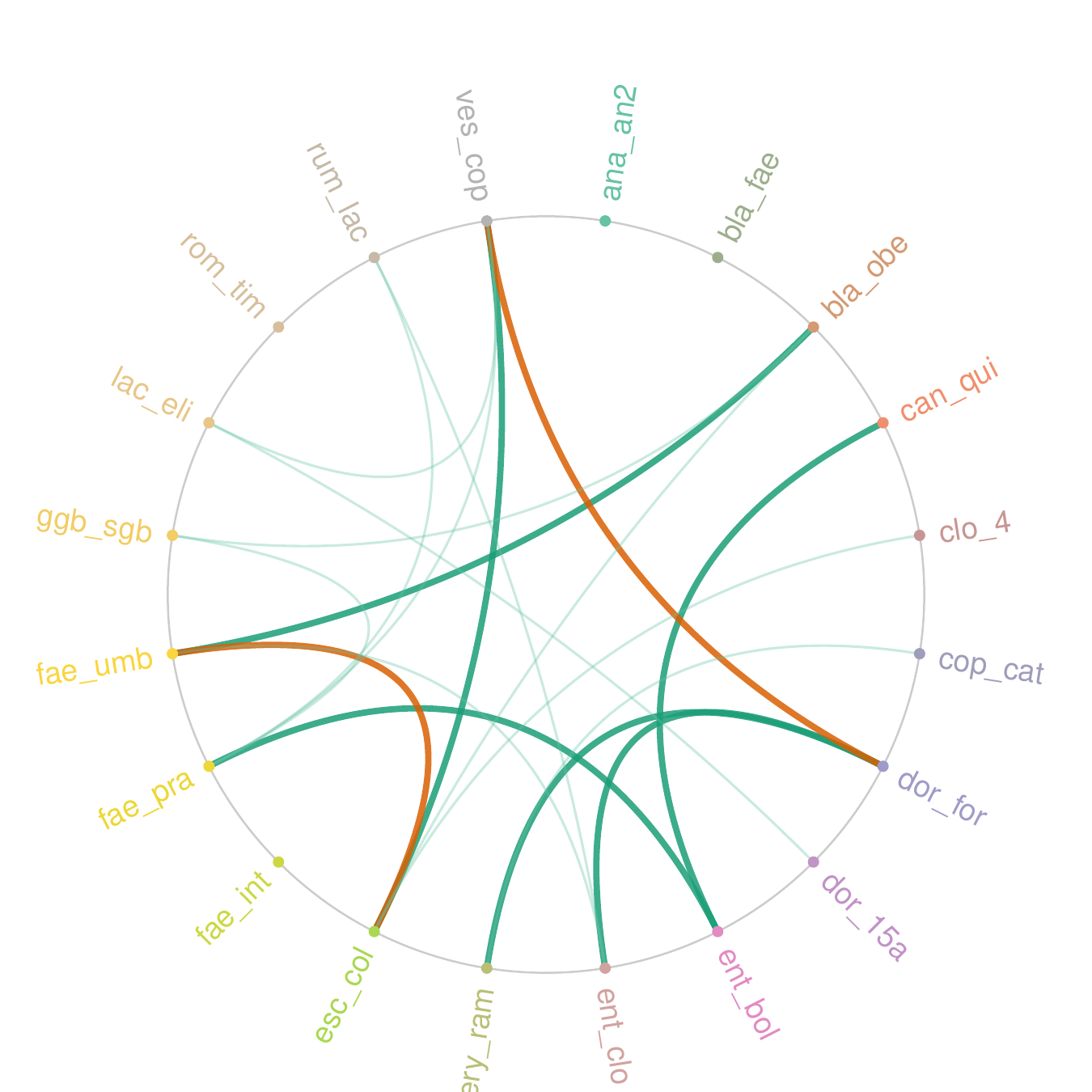}
        \caption{pub\_S114 $\mid$  Kumbhari\_2024}
        \label{fig:sta2}
    \end{subfigure}

    \caption{Chord diagram visualizations for two representative samples from the Kumbhari\_2024 cohort.}\vspace{-0.5em}
    \label{fig:sta_combined}
\end{figure}

For iLoRA's sample-level graphs, we symmetrize directed scores and select the top \(10\%\) of edges per sample (\(K=19\)) ranked by descending weight. We then evaluate these edges against the global significant set of 41 taxon pairs. As shown in Table~\ref{tab:top10_erro}, iLoRA substantially outperforms a random-graph baseline (expected error rate 50.0\%), achieving an error rate of \(27.3\%\). This indicates that iLoRA's strongest inferred interactions are globally concentrated on cohort-supported associations rather than being randomly distributed. The complete results and analysis details in the Appendix~\ref{sec:11x}.

Furthermore, high-weight non-overlapping edges in the iLoRA graphs can capture additional structural information beyond cohort-level marginal statistical screening. Across two representative samples (pub\_S402 and pub\_S114; Fig.~\ref{fig:sta_combined}), we highlight in green six edges that overlap with the 41 globally significant taxon pairs, and in orange several high-weight pairs that are not detected by the cohort-level screening but exhibit clear biological relevance in the sample-level graphs, including \textit{E.\ coli}--\textit{F.\ umbilicata}, \textit{D.\ formicigenerans}--\textit{V.\ coprocola}, \textit{Dorea sp.\ AF36-15AT}--\textit{GGB9453\_SGB14844}, and \textit{R.\ lactaris}--\textit{V.\ coprocola}. These sample-specific edges are consistent with prior IBD evidence: \textit{E.\ coli} has been reported as enriched among CD biomarkers, whereas \textit{D.\ formicigenerans} tends to be depleted; \textit{V.\ coprocola} has been linked to inflammatory readouts including fecal calprotectin; and \textit{R.\ lactaris} appears protective in multi-omic IBD risk models~\cite{Zheng2024NatMed,Gorman2024PLOSCompBio}. Moreover, given the established bile-acid--microbiome axis in IBD, associations involving a Dorea-lineage taxon and an uncharacterized SGB provide a biologically plausible context for sample-specific network rewiring~\cite{Bai2024BileAcidsIBD,Duboc2013BAIBD}. \rev{Taken together, the IBD experiments provide a compact three-way check on the same mechanism: prediction improves over LoRA and Bayesian baselines, calibration improves substantially, and high-weight edges are enriched in independently supported taxon pairs. Thus, the learned graph is useful as a conditioning signal for diagnosis, not merely as a visualization of the fitted model. Across domains, Molweni tests recovery against human-annotated discourse graphs, whereas IBD tests the same mechanism under sparse species-level features and a cohort-level statistical reference. This pairing is intentionally complementary: one benchmark makes the latent structure observable, while the other asks whether sample-conditioned structure improves a realistic biomedical prediction task.}

\enlargethispage{1.1\baselineskip}
\section{Conclusion}
We presented iLoRA. \rev{To our knowledge, it is the first Bayesian graph-conditioned LoRA framework: it infers a latent interaction graph from the input and uses it to generate input-conditioned LoRA updates.} Given an input profile, iLoRA infers a Poisson interaction graph, transforms it into a sparse graph with Laplace-distributed edge weights, and embeds the resulting latent structure with a GNN to generate per-example LoRA updates while keeping the LLM backbone frozen. This design delivers better predictions together with an explicit interaction structure, and further enables Bayesian-calibrated prediction by marginalizing over graph uncertainty via Monte Carlo estimation. Empirically, iLoRA consistently improves over standard PEFT and SoTA Bayesian adaptation baselines for both language and biomedical settings, while simultaneously recovering meaningful structure. Looking forward, the graph-to-adaptation principle can be extended beyond microbiomes to other multi-entity domains (e.g., multi-omics). We further discuss limitations regarding feature selection, graph scalability, inference overhead, task scope, and causal interpretation in Appendix~\ref{app:limitations}.


\section*{Impact Statement}
This work contributes to microbiome-based precision medicine by enabling joint, uncertainty-aware clinical prediction and discovery of latent microbial interaction structure, addressing a key limitation of correlation-based post hoc analyses. By integrating Bayesian, parameter-efficient adaptation with end-to-end learning of interaction graphs, the proposed framework improves diagnostic performance while yielding interpretable representations of microbe-microbe cross-talk, with direct relevance to inflammatory bowel disease and broader applicability to structured modeling in complex biological systems.

\section*{Acknowledgments}
We thank all reviewers, SPC, and AC for their valuable comments. S.B. acknowledges support from the Novo Nordisk Foundation via The Novo Nordisk Young Investigator Award (NNF20OC0059309). S.B. acknowledges support from The Eric and Wendy Schmidt Fund For Strategic Innovation via the Schmidt Polymath Award (G-22-63345) which also supports HH and LM. S.B. acknowledges support from the Novo Nordisk Foundation via the Global Pathogen Analysis Platform (GPAP) (NNF26SA0109818). HW is supported by Amazon Faculty Research Award, Microsoft AI \& Society Fellowship, NSF CAREER Award IIS-2340125, NIH grant R01CA297832, and NSF grant IIS-2127918.

\bibliography{example_paper}
\bibliographystyle{icml2026}

\newpage
\appendix
\onecolumn
\section{Proof of Theorem 5.1}
\begin{proof}
Fix an edge $(i,j)$ and write $u \triangleq u_{ij}$ and $\delta^2 \triangleq \delta_{ij}^2$.
For $m>0$, define
\[
P_m \triangleq \mathcal{N}(m,m),\qquad Q \triangleq \mathcal{N}(u,\delta^2).
\]
Using the closed form KL divergence between univariate Gaussians,
\[
\mathrm{KL}\!\big(\mathcal{N}(\mu_0,\sigma_0^2)\,\|\,\mathcal{N}(\mu_1,\sigma_1^2)\big)
=
\frac12\left(
\log\frac{\sigma_1^2}{\sigma_0^2}
+
\frac{\sigma_0^2+(\mu_0-\mu_1)^2}{\sigma_1^2}
-1
\right),
\]
we obtain the objective
\begin{align}
f(m)
&\triangleq \mathrm{KL}(P_m\|Q) \nonumber\\
&= \frac12\left(
\log\frac{\delta^2}{m}
+
\frac{m+(m-u)^2}{\delta^2}
-1
\right), \qquad m>0.
\label{eq:kl_obj}
\end{align}
This function is strictly convex on $(0,\infty)$ because
\[
f''(m)=\frac12\left(\frac{1}{m^2}+\frac{2}{\delta^2}\right)>0.
\]
Hence the minimizer is unique and is characterized by $f'(m)=0$. Differentiating
\eqref{eq:kl_obj} gives
\[
f'(m)=\frac12\left(-\frac{1}{m}+\frac{1+2(m-u)}{\delta^2}\right).
\]
Setting $f'(m)=0$ and rearranging yields
\[
-\frac{1}{m}+\frac{1+2(m-u)}{\delta^2}=0
\quad\Longleftrightarrow\quad
m\big(1+2(m-u)\big)=\delta^2,
\]
equivalently the quadratic equation
\[
2m^2+(1-2u)m-\delta^2=0.
\]
Its discriminant is
\[
\Delta=(1-2u)^2+8\delta^2=(2u-1)^2+8\delta^2>0,
\]
so the solutions are
\[
m=\frac{-(1-2u)\pm\sqrt{\Delta}}{4}
=\frac{2u-1\pm\sqrt{(2u-1)^2+8\delta^2}}{4}.
\]
Because $\sqrt{\Delta}>|2u-1|$ for $\delta^2>0$, the ``$+$'' root is strictly positive
while the ``$-$'' root is negative. Therefore, the unique minimizer on $m>0$ is
\[
m^\star
=
\frac{2u-1+\sqrt{(2u-1)^2+8\delta^2}}{4}
\;>\;0.
\]
Finally, since the variational approximation is Gaussian,
sampling $\tilde{\alpha}_{ij}=u+\delta\,\epsilon$ with $\epsilon\sim\mathcal{N}(0,1)$
is reparameterizable and provides a pathwise estimator for expectations under
$q_\varphi(\tilde{\alpha}_{ij}\mid Z)$.
\end{proof}

\section{Architectures and Training Details}
\label{sec:arch-train-details}

This section provides a detailed description of the components used in the iLoRA framework, the specific mechanisms for latent graph inference, and the training protocol utilized for both the Molweni and IBD diagnosis tasks.

\subsection{Input Processing and Embedding}
\label{sec:input_proc}

\paragraph{IBD Microbiome Encoding.}
Unlike standard text inputs, microbiome profiles are inherently tabular and sparse. We adopt a feature-to-text transformation to bridge this gap. Let
\[
X = \{(n_1, v_1), \dots, (n_{20}, v_{20})\}
\]
denote the sequence of the top 20 significant microbial feature pairs, where $n_i$ represents the paired microbe names and $v_i$ denotes their corresponding normalized abundances, as identified via MaAsLin2 (see Sec.~\ref{app:maaslin}). Each feature pair is serialized into a natural-language description encoding both identity and abundance information.

To align these biological features with the semantic space of large language models, we obtain initial embeddings using the frozen embedding layer of a pre-trained Qwen3-8B model ~\citep{qwen3technicalreport}. Specifically, the resulting representation
\[
H^{(0)} \in \mathbb{R}^{B \times N \times D_m}
\]
serves as the initial node embedding tensor, where $B$ is the batch size, $N{=}20$ is the sequence length corresponding to the number of feature pairs, and $D_m$ denotes the hidden dimension of the LLM. These embeddings are subsequently used as input node features for the structural encoder.

\paragraph{Molweni Dialogue Encoding.}
For the multiparty dialogue task, utterances are tokenized and encoded using the Llama-3.1-8B tokenizer ~\citep{grattafiori2024llama}. The distinct speaker tokens and discourse markers are preserved to maintain conversational structure.

\subsection{Latent Graph Inference Module (iLoRA)}

The core of our method is the extraction of a latent interaction graph $G=(V, E)$ from the input embeddings, which subsequently conditions the LoRA adaptation.

\paragraph{Node and Edge Representation.}
The input embeddings $X$ are first projected to a lower-dimensional graph space via a linear layer, yielding node embeddings $H \in \mathbb{R}^{N \times d_g}$ (where $d_g=128$). To capture pairwise dependencies, we construct an edge embedding tensor $E_{ij}$ by concatenating node pairs:
\begin{equation}
    E_{ij} = \text{MLP}_{\text{edge}}([h_i \mathbin{\|} h_j]), \quad E_{ij} \in \mathbb{R}^{2d_g}
\end{equation}
This pairwise representation forms the basis for our Bayesian edge inference.

\paragraph{NPN-based Variational Inference.}

To model the uncertainty and sparsity of interactions, we utilize NPN to parameterize the edge distributions. We employ a two-branch encoder to estimate the prior and posterior distributions of edge weights:
\begin{itemize}
    \item \textbf{Prior Network:} Estimates a Poisson rate parameter $m^{(0)}_{ij}$ representing the prior expected interaction strength.
    \item \textbf{Posterior Network:} Estimates the mean $\mu_{ij}$ and variance $\sigma^2_{ij}$ of a Gaussian approximation to the Poisson edge-count distribution.
\end{itemize}
To induce sparsity, we perform a mapping from the Gaussian posterior to a \textbf{Laplace} distribution via the CDF transformation technique. The final edge weight $A_{ij}$ is sampled via reparameterization:
\begin{equation}
    A_{ij} = \text{ReLU}(\text{Sample}_{\text{Laplace}}(\mu_{ij}, \sigma^2_{ij}))
\end{equation}
This ensures that the inferred graph is sparse and non-negative, interpretable as interaction strengths.

\paragraph{Graph Convolution and Matching Attention.}
The sampled adjacency matrix $A$ is used to propagate information via a 2-layer Graph Convolutional Network (GCN):
\begin{equation}
    H^{(l+1)} = \sigma(D^{-1/2} \tilde{A} D^{-1/2} H^{(l)} W^{(l)})
\end{equation}
The structure-aware node embeddings are then fused with the original context via a \textbf{Matching Attention} mechanism, which computes an attention alignment between the graph-refined representations and the original sentence embeddings.

\subsection{Hypernetwork and LoRA Injection}
Unlike standard LoRA, which learns static matrices $A$ and $B$, iLoRA functions as a hypernetwork. The pooled graph representation $h_{graph}$ is projected to generate the LoRA update weights dynamically for each input sample. Specifically, the hypernetwork generates the LoRA $A$ matrix for the final adapted layer of the LLM, while sharing a static $B$ matrix across samples.

\begin{equation}
    \Delta W = s \cdot B \cdot \text{reshape}(\text{MLP}_{\text{proj}}(h_{graph}))
\end{equation}
where $\text{MLP}_{\text{proj}}: \mathbb{R}^{d_g} \rightarrow \mathbb{R}^{r \times d_{in}}$ generates the input-conditioned $A$ matrix, while $B$ remains static or is jointly optimized. This allows the LLM to adapt its processing logic based on the inferred interaction structure of the current sample.

\subsection{Optimization and Protocol}

\paragraph{Loss Function.} The total objective function $\mathcal{L}$ is a weighted sum of the task-specific loss (Cross-Entropy) and Bayesian regularization terms for the latent graph:
\begin{equation}
    \mathcal{L} = \mathcal{L}_{\text{CE}} + \lambda_{\text{Lap}} \text{KL}(Q_{\text{Lap}} \mathbin{\|} P_{\text{Prior}}) + \lambda_{\text{Pois}} \text{KL}(Q_{\text{Pois}} \mathbin{\|} P_{\text{Pois}})
\end{equation}
where $\lambda_{\text{Lap}}$ and $\lambda_{\text{Pois}}$ control the strength of the structural priors.

\paragraph{Training Strategy.} We employ the AdamW optimizer. We freeze the LLM backbone and only update the LoRA adapters, the iLoRA graph module, and (for IBD) the class token embeddings.

\paragraph{Evaluation.} For classification, we use the probability of the specific tokens (``yes''/``no'') associated with the classes. For span extraction, we employ constrained beam search to generate valid JSON outputs.

\subsection{Shapes Summary}
Table~\ref{tab:shapes} details the tensor transformations through the iLoRA module.

\begin{table}[H]
\centering
\caption{Tensor shapes in the iLoRA module (Configured for IBD Task).}
\label{tab:shapes}
\small
\begin{adjustbox}{max width=\linewidth}
\begin{tabular}{lccc}
\toprule
\textbf{Module} & \textbf{Input Shape} & \textbf{Output Shape} & \textbf{Description} \\
\midrule
Input Embeddings & $\mathbb{R}^{B \times N \times 4096}$ & $\mathbb{R}^{B \times N \times 128}$ & Linear Projection to Graph Dim \\
Pairwise Constructor & $\mathbb{R}^{B \times N \times 128}$ & $\mathbb{R}^{B \times N^2 \times 256}$ & Concat node pairs for all $i,j$ \\
NPN Posterior (Mean/Var) & $\mathbb{R}^{B \times N^2 \times 256}$ & $\mathbb{R}^{B \times N^2 \times 1}$ & Estimate edge distribution parameters \\
Sampled Adjacency & Parameters & $\mathbb{R}^{B \times N \times N}$ & Sparse Adjacency Matrix $A$ \\
GCN Layer 1 & $\mathbb{R}^{B \times N \times 128}$ & $\mathbb{R}^{B \times N \times 128}$ & Graph Convolution \\
GCN Layer 2 & $\mathbb{R}^{B \times N \times 128}$ & $\mathbb{R}^{B \times N \times 128}$ & Graph Convolution \\
Readout Pooling & $\mathbb{R}^{B \times N \times 128}$ & $\mathbb{R}^{B \times 128}$ & Mean pooling over nodes \\
Hypernetwork Head & $\mathbb{R}^{B \times 128}$ & $\mathbb{R}^{B \times (r \cdot 4096)}$ & Generates LoRA $A$ weights \\
\bottomrule
\end{tabular}
\end{adjustbox}
\end{table}

\subsection{Pseudocode Summaries}

\begin{algorithm}[H]
\caption{iLoRA Forward Pass (Latent Graph to Weights)}
\label{alg:ilora_forward}
\begin{algorithmic}[1]
\REQUIRE Input embeddings $X \in \mathbb{R}^{N \times D}$, Masks $M$
\ENSURE LoRA Adapter Weights $\Delta W$, KL losses
\STATE \textbf{Step 1: Node \& Edge Encoding}
\STATE $H \leftarrow \text{Linear}(X)$ \COMMENT{Project to graph dim}
\STATE $E_{ij} \leftarrow \text{Concat}(H_i, H_j)$ for all $i,j$
\STATE \textbf{Step 2: Bayesian Edge Inference (NPN)}
\STATE $\mu_{ij}, \sigma_{ij} \leftarrow \text{Encoder}_{\text{Post}}(E_{ij})$ \COMMENT{Posterior params}
\STATE $m_{ij} \leftarrow \text{Encoder}_{\text{Prior}}(E_{ij})$ \COMMENT{Prior Poisson rate}
\STATE $A_{ij} \leftarrow \text{SampleLaplace}(\mu_{ij}, \sigma_{ij})$ \COMMENT{Reparameterized sampling}
\STATE $A \leftarrow \text{ReLU}(A)$ \COMMENT{Enforce non-negativity}
\STATE \textbf{Step 3: Structure-Aware Encoding}
\STATE $H_{graph} \leftarrow \text{GCN}(H, A)$ \COMMENT{Message passing}
\STATE $H_{fused} \leftarrow \text{MatchingAttention}(H, H_{graph})$
\STATE $h_{ctx} \leftarrow \text{MeanPool}(H_{fused}, M)$
\STATE \textbf{Step 4: Weight Generation}
\STATE $W_{lora} \leftarrow \text{MLP}_{\text{hyper}}(h_{ctx})$
\STATE $\Delta W \leftarrow \text{Reshape}(W_{lora}, (r, D_{in}))$
\STATE \textbf{Step 5: Regularization}
\STATE $\mathcal{L}_{KL} \leftarrow \text{KL}_{\text{Laplace}} + \text{KL}_{\text{Poisson}}$
\STATE \STATE \textbf{return} $\Delta W, \mathcal{L}_{KL}$
\end{algorithmic}
\end{algorithm}

\begin{algorithm}[H]
\caption{NPN Sparse Edge Sampling}
\label{alg:npn_sampling}
\begin{algorithmic}[1]
\REQUIRE Posterior Gaussian params $\mu, \sigma$
\ENSURE Sparse edge weight $w$
\STATE $\epsilon \sim \mathcal{N}(0, 1)$
\STATE $z \leftarrow \mu + \sigma \cdot \epsilon$ \COMMENT{Gaussian proxy}
\STATE $u \leftarrow \Phi(z)$ \COMMENT{Map to Uniform via Gaussian CDF}
\STATE $w \leftarrow F^{-1}_{\text{Laplace}}(u)$ \COMMENT{Inverse Transform Sampling}
\STATE \STATE \textbf{return} $w$
\end{algorithmic}
\end{algorithm}

\section{Implementation Details}
\label{app:implementation}

All experiments were implemented using the PyTorch framework and the Hugging Face PEFT library~\citep{peft}. Training was performed with mixed-precision (BF16 for IBD, FP16 for Molweni) to optimize computational efficiency.

\paragraph{Model Architectures.} 
For the \textbf{Molweni} span extraction task, we utilized the \textbf{Llama-3.1-8B-Instruct}~\citep{grattafiori2024llama} as the backbone. For the \textbf{IBD Diagnosis} task, we utilized \textbf{Qwen3-8B}~\citep{qwen3technicalreport}.
\begin{itemize}
    \item \textbf{LoRA Configuration:} 
    \begin{itemize}
        \item \textit{Molweni:} We applied LoRA to projection layers (\texttt{q\_proj}, \texttt{k\_proj}, \texttt{v\_proj}, \texttt{o\_proj}) with rank $r=8$, scaling factor $\alpha=16$, and dropout $0.05$.
        \item \textit{IBD Diagnosis:} We applied LoRA to the same projection layers but with rank $r=16$ and $\alpha=32$. Additionally, we set the embedding layer (\texttt{embed\_tokens}) and language model head (\texttt{lm\_head}) as trainable modules to accommodate the task-specific class tokens.
    \end{itemize}
\end{itemize}

\paragraph{Training Hyperparameters.}
We utilized the \textbf{AdamW} optimizer for all experiments. Due to the differing nature of the tasks (span extraction vs. long-context binary classification), specific hyperparameters were tuned separately. These are detailed in Table~\ref{tab:hyperparameters}. 

\begin{table}[h]
    \centering
    \caption{Hyperparameter settings for iLoRA experiments across tasks.}
    \label{tab:hyperparameters}
    \begin{tabular}{l c c}
        \toprule
        \textbf{Hyperparameter} & \textbf{Molweni (Span Extraction)} & \textbf{IBD Diagnosis} \\
        \midrule
        Batch Size & 4 & 2 \\
        Learning Rate & $1 \times 10^{-4}$ & $2 \times 10^{-4}$ \\
        Epochs & 3 & 6 \\
        Warmup & Ratio $0.06$ & Steps $100$ \\
        Weight Decay & 0.0 & 0.0 \\
        Max Sequence Length & 1024 & 9000 \\
        \midrule
        \multicolumn{3}{c}{\textit{Bayesian Regularization}} \\
        \midrule
        Laplace KL Weight ($\lambda_{\text{Lap}}$) & tuned in $[10^{-4},\,3\times10^{-3}]$ & tuned in $[10^{-4},\,3\times10^{-3}]$ \\
        Poisson KL Weight ($\lambda_{\text{Pois}}$) & tuned in $[10^{-4},\,3\times10^{-3}]$ & tuned in $[10^{-4},\,3\times10^{-3}]$ \\

        \bottomrule
    \end{tabular}
\end{table}

\paragraph{Hardware.}
All models were fine-tuned on NVIDIA RTX 6000 Pro Blackwell PCIe GPUs and NVIDIA A100 80GB PCIe GPUs.

\paragraph{Baseline Configurations.}

\noindent\textbf{MAP baseline.} For MAP training, we set \texttt{weight\_decay}=0.01 in AdamW, treating the resulting solution as a maximum-a-posteriori estimate under an $\ell_2$ Gaussian prior on trainable parameters.

\paragraph{Uncertainty Baseline Configurations.}
To ensure a fair comparison, baselines were configured as follows:

\begin{itemize}
    \item \textbf{Monte-Carlo Dropout (MCD):} We used a dropout rate of $p=0.05$ and performed $5$ stochastic forward passes during inference to estimate the posterior predictive distribution.
    \item \textbf{Deep Ensemble (ENS):} We trained an ensemble of $3$ independent LoRA adapters initialized with different random seeds. For classification, we averaged the logits; for text generation, we employed majority voting on the generated tokens.
    \item \textbf{BLOB:} We utilized the variational inference approach with $N=10$ Bayesian evaluation samples during the forward pass.
  

    \item \textbf{Laplace LoRA (LAP):} We implemented a post-hoc Kronecker-factored Laplace approximation on pre-trained MAP checkpoints. We evaluate LAP on the IBD next-token diagnosis task, where prediction reduces to a fixed binary token decision (``yes''/``no'') and the likelihood/curvature computation is directly comparable across PEFT baselines. We do not report LAP for Molweni span extraction. Extending this post-hoc LAP implementation to autoregressive span generation would require sequence-level or token-wise curvature estimation and uncertainty propagation through decoding, which is non-trivial and not directly comparable to the fixed-label IBD setting.

\end{itemize}

\paragraph{Computational Overhead. }
We measure inference latency and GPU memory on the IBD diagnosis setting under the same backbone and evaluation protocol. 
At inference time, iLoRA uses one graph sample ($S=1$) by default. 
As shown in Table~\ref{tab:computational_overhead}, iLoRA introduces moderate overhead compared with vanilla LoRA due to the sample-specific graph inference and hypernetwork generation, but remains substantially faster than Deep Ensemble. 
This overhead is expected because iLoRA performs pairwise edge inference over the selected taxa and then generates an input-conditioned LoRA update, whereas Deep Ensemble requires multiple independently trained adapters.
\rev{Because the default setting uses $S=1$, the added latency is dominated by a single graph-hypernetwork pass rather than repeated multi-sample adapter evaluation; larger $S$ can be used when stronger marginalization over graph uncertainty is desired.}

\begin{table}[t]
\caption{Inference cost comparison on the IBD diagnosis task. Latency is measured per sample.}
\label{tab:computational_overhead}
\centering
\small
\begin{tabular}{lcc}
\toprule
Model & Latency (ms/sample) $\downarrow$ & GPU memory (MB) $\downarrow$ \\
\midrule
LoRA (MLE) & 377.2 & 21263.9 \\
ENS & 1144.6 & 21263.9 \\
iLoRA & 567.5 & 21330.3 \\
\bottomrule
\end{tabular}
\end{table}

\section{Evaluation Metrics and Model Selection}
\label{app:metrics}

\paragraph{IBD Diagnosis.}
Distinguishing UC from CD based solely on microbiome profiles is a non-trivial challenge for conventional machine learning models~\citep{Kang2023Diagnosis}. Motivated by this clinical difficulty, we focus specifically on the binary classification task of \textbf{UC vs. CD}. 

To comprehensively assess performance in this imbalanced setting, we employ \textbf{Expected Calibration Error (ECE)}, \textbf{F1-score (UC class)}, \textbf{AUROC}, and \textbf{AUPRC}~\citep{Saito2015PrecisionRecall} as evaluation metrics. We select the best model checkpoint based on the optimal \textbf{AUROC} achieved on the validation set.

\paragraph{Multiparty Dialogue (Molweni).}
For the span extraction task, we utilize standard machine reading comprehension metrics: \textbf{Exact Match (EM)} and \textbf{F1-score}. The best checkpoint was selected based on the highest \textbf{F1-score} on the validation set.

\section{Microbiome Dataset Details}
\label{app:datasets}

To ensure the robustness and generalizability of iLoRA, we integrated microbiome profiles from distinct independent studies. These cohorts cover a diverse range of patient populations and sequencing methodologies. Table~\ref{tab:dataset_citations} lists the specific cohort identifiers used in our experiments alongside their corresponding primary academic citations.

\begin{table}[h]
    \centering
    \caption{List of independent microbiome cohorts used in the aggregated IBD dataset and their corresponding references.}
    \label{tab:dataset_citations}
    \begin{tabular}{l l l}
        \toprule
        \textbf{Cohort ID} & \textbf{Study Reference} & \textbf{Citation} \\
        \midrule
        Ananthakrishnan\_2017 & Ananthakrishnan et al. (2017) & \citep{Ananthakrishnan2017Gut} \\
        Franzosa\_2019 (B/N) & Franzosa et al. (2019) & \citep{Franzosa2019GutMicrobiome} \\
        Khachatryan\_2023 & Khachatryan et al. (2023) & \citep{Khachatryan2023Results} \\
        Kumbhari\_2024 & Kumbhari et al. (2024) & \citep{Kumbhari2024Discovery} \\
        Lee\_2021 & Lee et al. (2021) & \citep{Lee2021MultiOmicsIBD} \\
        Lloyd-Price\_2019 & Lloyd-Price et al. (2019) & \citep{LloydPrice2019MultiOmics} \\
        Ning\_2023 & Ning et al. (2023) & \citep{Ning2023MicrobiomeMetabolome} \\
        \bottomrule
    \end{tabular}
\end{table}

\section{Feature Selection (MaAsLin2)}
\label{app:maaslin}


Microbiome data is high-dimensional and sparse. We employed \textbf{MaAsLin2}~\citep{Mallick2021MaAsLin2} (Microbiome Multivariable Associations with Linear Models) to identify significant microbial features distinguishing UC from CD. We selected the top 20 species based on FDR-adjusted $q$-values to serve as the structured nodes for our probabilistic graph inference module. Table~\ref{tab:maaslin2_features} lists these features.

\begin{table}[t]
    \centering
    \caption{Top 20 significant microbial species identified by MaAsLin2 for distinguishing UC from CD. These species act as nodes in the iLoRA latent interaction graph.}
    \label{tab:maaslin2_features}
    \small
    \setlength{\tabcolsep}{5pt}
    \begin{tabular}{l c c c c}
    \toprule
    \textbf{Microbial Feature} & \textbf{Coef} & \textbf{StdErr} & \textbf{P-value} & \textbf{Q-value} \\
    \midrule
    \textit{Faecalibacterium prausnitzii} & -3.261 & 0.418 & $1.51 \times 10^{-14}$ & $3.10 \times 10^{-12}$ \\
    \textit{Lachnospira eligens} & -3.524 & 0.459 & $3.78 \times 10^{-14}$ & $3.89 \times 10^{-12}$ \\
    \textit{Candidatus Cibionibacter quicibialis} & -2.850 & 0.424 & $3.09 \times 10^{-11}$ & $2.12 \times 10^{-9}$ \\
    \textit{Faecalimonas umbilicata} & 2.125 & 0.320 & $4.94 \times 10^{-11}$ & $2.12 \times 10^{-9}$ \\
    \textit{GGB9453 SGB14844} & -2.134 & 0.321 & $5.14 \times 10^{-11}$ & $2.12 \times 10^{-9}$ \\
    \textit{Dorea sp AF36 15AT} & -2.386 & 0.374 & $2.66 \times 10^{-10}$ & $9.13 \times 10^{-9}$ \\
    \textit{Enterocloster bolteae} & 2.047 & 0.324 & $4.11 \times 10^{-10}$ & $1.15 \times 10^{-8}$ \\
    \textit{Ruminococcus lactaris} & -1.863 & 0.296 & $4.46 \times 10^{-10}$ & $1.15 \times 10^{-8}$ \\
    \textit{Dorea formicigenerans} & -2.234 & 0.356 & $5.18 \times 10^{-10}$ & $1.19 \times 10^{-8}$ \\
    \textit{Blautia faecis} & -2.287 & 0.368 & $7.18 \times 10^{-10}$ & $1.48 \times 10^{-8}$ \\
    \textit{Vescimonas coprocola} & -2.106 & 0.344 & $1.27 \times 10^{-9}$ & $2.38 \times 10^{-8}$ \\
    \textit{Coprococcus catus} & -1.887 & 0.312 & $2.03 \times 10^{-9}$ & $3.21 \times 10^{-8}$ \\
    \textit{Enterocloster clostridioformis} & 2.040 & 0.337 & $1.90 \times 10^{-9}$ & $3.21 \times 10^{-8}$ \\
    \textit{Faecalibacillus intestinalis} & -2.260 & 0.378 & $3.03 \times 10^{-9}$ & $4.46 \times 10^{-8}$ \\
    \textit{Erysipelatoclostridium ramosum} & 2.532 & 0.426 & $3.79 \times 10^{-9}$ & $5.21 \times 10^{-8}$ \\
    \textit{Blautia obeum} & -2.378 & 0.412 & $1.02 \times 10^{-8}$ & $1.31 \times 10^{-7}$ \\
    \textit{Clostridium sp AF36 4} & -2.441 & 0.425 & $1.22 \times 10^{-8}$ & $1.47 \times 10^{-7}$ \\
    \textit{Romboutsia timonensis} & -1.874 & 0.330 & $1.69 \times 10^{-8}$ & $1.89 \times 10^{-7}$ \\
    \textit{Escherichia coli} & 2.508 & 0.441 & $1.74 \times 10^{-8}$ & $1.89 \times 10^{-7}$ \\
    \textit{Anaeromassilibacillus sp An250} & -1.647 & 0.290 & $1.86 \times 10^{-8}$ & $1.92 \times 10^{-7}$ \\
    \bottomrule
    \end{tabular}
\end{table}

We further evaluate the sensitivity of iLoRA to the number of selected taxa $K$. 
For each value of $K$, we select the top-$K$ taxa according to MaAsLin2 FDR-adjusted $q$-values and keep the remaining training and evaluation protocol fixed. 
Table~\ref{tab:k_sensitivity} shows that $K=20$ provides the best overall operating point, achieving the best ECE, AUROC, and AUPRC. 
Increasing $K$ to $30$ or $40$ does not improve ranking performance and worsens calibration, suggesting that additional low-signal taxa mainly introduce noisy candidate edges in the $O(K^2)$ graph branch.

\begin{table}[t]
\caption{Sensitivity to the number of selected taxa $K$ on IBD diagnosis.}
\label{tab:k_sensitivity}
\centering
\small
\begin{tabular}{ccccc}
\toprule
$K$ & ECE $\downarrow$ & F1 (UC) $\uparrow$ & AUROC $\uparrow$ & AUPRC $\uparrow$ \\
\midrule
10 & 0.1082 & 0.6393 & 0.7545 & 0.7337 \\
20 & \textbf{0.0980} & 0.6557 & \textbf{0.7990} & \textbf{0.7617} \\
30 & 0.1217 & 0.5405 & 0.7574 & 0.6782 \\
40 & 0.1438 & \textbf{0.6719} & 0.7359 & 0.6593 \\
\bottomrule
\end{tabular}
\end{table}

\section{Prompt Formulations}
\label{app:prompts}

Table~\ref{tab:prompts} outlines the exact templates used for both tasks. For the IBD diagnosis task, the ``Microbiome Profile'' section of the prompt is dynamically populated with the non-zero microbial species from each specific sample, sorted by relative abundance.

\begin{table*}[h]
    \centering
    \resizebox{\textwidth}{!}{%
        \begin{tabular}{l p{0.85\textwidth}} 
            \toprule
            \textbf{Dataset} & \textbf{Prompt Template and Example} \\
            \midrule
            \textbf{Molweni} & \textbf{System Prompt:} You are a helpful, respectful and honest assistant. Always answer as helpfully as possible... [Safety Boilerplate] \\[1ex]
            (Span Extraction) & \textbf{User Input:} \\
             & \textit{[Task Instruction]} Extract the minimal span, word for word, from the following context that best answers the question. Output only the answer in the following JSON format... \\
             & \textit{[Context]} sipher: bacon5o there 's no `` fixmbr '' with ubuntu ... [truncated] ... \\
             & \textit{[Question]} Why does Bacon5o not want ubuntu ? \\[1ex]
             & \textbf{Target Output:} \texttt{\{"answer": "it does n't support my internet"\}} \\
            \midrule
            \textbf{IBD Diagnosis} & \textbf{User Input:} \\[0.5ex]
            (Next-Token) & \textit{[Role \& Task Instruction]} You are a gastroenterologist focused on distinguishing Ulcerative Colitis (UC) from Crohn's Disease (CD). Based on the microbiome profile, answer exactly one token: 'yes' for UC or 'no' for CD. \\[1ex]
             & \textit{[Sample Info]} \\
             & \quad $\bullet$ Sample ID: CRR1110479 \textbar{} Ning\_2023 \\
             & \quad $\bullet$ Source dataset: updated\_metaphlan202403\_subtype\_Ning\_2023 \\
             & \quad $\bullet$ Expected answers: UC $\to$ yes; CD $\to$ no \\[1ex]
             & \textit{[Microbiome Profile]} Detected non-zero microbial species and their relative abundances: \\
             & \quad - \textit{Eisenbergiella\_massiliensis}: 0.2666 \\
             & \quad - \textit{Escherichia\_coli}: 0.1723 \\
             & \quad ... [List of non-zero species sorted by abundance] ... \\
             & \quad - \textit{Proteus\_mirabilis}: 0.0000098 \\
             & \quad Answer: \\[1ex]
             & \textbf{Target Output:} no \\
            \bottomrule
        \end{tabular}%
    }
    \caption{Prompt formulations for the Molweni and IBD Diagnosis tasks.}
    \label{tab:prompts}
\end{table*}

\section{Case Study Dialogue Content}
\label{app:case_studies}

Table~\ref{tab:case_study_dialogue_1371} and Table~\ref{tab:case_study_dialogue_2568} present the raw dialogue text for the two case studies discussed in the main text.

\renewcommand{\arraystretch}{1.3}

\begin{table}[h]
    \centering
    \small 
    \caption{Dialogue content for Sample 1371 (Case Study 1). The conversation contains two distinct topics.}
    \label{tab:case_study_dialogue_1371}

    \begin{tabularx}{\linewidth}{l l X}
        \toprule
        \textbf{ID} & \textbf{Speaker} & \textbf{Utterance} \\
        \midrule

        \multicolumn{3}{c}{\cellcolor{gray!10}\textit{\textbf{Thread A: Building tar files}}} \\ 
        \midrule
        U0 & technodude & can you build a tar file on a debian based distro ? \\
        U1 & cafuego & of course . just make sure you use .FILEPATH -- prefix=FILEPATH \\
        U2 & linux-rulz & yes , i do it all the time with things like mplayer and ffmpe \\
        
        \midrule
   
        \multicolumn{3}{c}{\cellcolor{gray!10}\textit{\textbf{Thread B: Alcatel USB Driver Issues}}} \\
        \midrule
        U3 & predaeus & t 's a alcatel usb speedtouch \\
        U4 & cafuego & probably a badly ported driver , then . \\
        U5 & cafuego & can you swap it over for an ethernet one ? \\
        U6 & predaeus & no should and must work with usb \\
        U7 & cafuego & no , alcatel have provided a driver which does n't work under 64bit kernels , it seems . \\
        \bottomrule
    \end{tabularx}
\end{table}

\begin{table}[h]
    \centering
    \small
    \caption{Dialogue content for Sample 2568 (Case Study 2). The conversation involves troubleshooting a server mounting issue.}
    \label{tab:case_study_dialogue_2568}
    
    \begin{tabularx}{\linewidth}{l l X}
        \toprule
        \textbf{ID} & \textbf{Speaker} & \textbf{Utterance} \\
        \midrule
        U0 & mrwes & mount shares from ubuntu server FILEPATH ... you changed this to match your information ? \\
        U1 & Doonz & nope didnt here is everything URL \\
        U2 & mrwes & there are two in front of your ip \\
        U3 & mrwes & and when you did a mkdir you used sharedfiles or server ? \\
        U4 & Doonz & the extra must be something pastebin \\
        U5 & mrwes & what fs is the share ? ntfs or ext3 ? \\
        U6 & Doonz & i can connect to the share through places - connect to server \\
        \bottomrule
    \end{tabularx}
\end{table}

\newpage
\section{Detailed Results per IBD Cohort}
\label{sec:appendix_ibd_breakdown}

Table~\ref{tab:ibd_full_breakdown} provides the comprehensive performance breakdown for iLoRA and all baseline methods across the eight independent datasets used in the IBD diagnosis task.

\begin{table*}[htbp]
    \centering
    \caption{Full performance breakdown (ECE, F1-UC, AUROC, AUPRC) for iLoRA and all baselines across individual IBD cohorts. \textbf{MCD}: Monte-Carlo Dropout; \textbf{ENS}: Deep Ensemble; \textbf{LAP}: Laplace LoRA.}
    \label{tab:ibd_full_breakdown}
    \scriptsize
    \renewcommand{\arraystretch}{0.94}
    \setlength{\tabcolsep}{4pt}
    \begin{tabular}{l l c c c c}
        \toprule
        \textbf{Dataset Cohort} & \textbf{Method} & \textbf{ECE} & \textbf{F1 (UC)} & \textbf{AUROC} & \textbf{AUPRC} \\
        \midrule
        
        \multirow{7}{*}{\textbf{Kumbhari\_2024}} 
        & MLE & 0.2444 & 0.6341 & 0.7643 & 0.7621 \\
        & MAP & 0.2641 & 0.6154 & 0.7507 & 0.6582 \\
        & MCD & 0.3168 & 0.5778 & 0.7296 & 0.6674 \\
        & ENS & 0.1107 & 0.6829 & 0.7894 & \textbf{0.8089} \\
        & BLOB & 0.1449 & \textbf{0.6842} & \textbf{0.8207} & 0.8047 \\
        & LAP & 0.2243 & 0.6154 & 0.7486 & 0.6563 \\
        & \textbf{iLoRA} & \textbf{0.1029} & 0.6522 & 0.7976 & 0.7321 \\
        \cmidrule{1-6}

        \multirow{7}{*}{\textbf{Lee\_2021}} 
        & MLE & \textbf{0.1701} & \textbf{0.8000} & 0.8264 & \textbf{0.8717} \\
        & MAP & 0.2723 & 0.7143 & 0.8194 & 0.8575 \\
        & MCD & 0.2170 & \textbf{0.8000} & 0.7639 & 0.8374 \\
        & ENS & 0.1768 & 0.6667 & 0.8333 & 0.8517 \\
        & BLOB & 0.2163 & 0.6154 & 0.8472 & 0.8507 \\
        & LAP & 0.2557 & 0.7143 & 0.8056 & 0.8534 \\
        & \textbf{iLoRA} & 0.1738 & \textbf{0.8000} & \textbf{0.8611} & 0.8671 \\
        \cmidrule{1-6}

        \multirow{7}{*}{\textbf{Khachatryan\_2023}} 
        & MLE & 0.5202 & 0.2500 & 0.5000 & \textbf{0.6593} \\
        & MAP & 0.4944 & 0.4444 & 0.5500 & 0.6117 \\
        & MCD & \textbf{0.3571} & \textbf{0.7273} & \textbf{0.6500} & \textbf{0.7783} \\
        & ENS & 0.4449 & 0.2500 & 0.4000 & 0.5311 \\
        & BLOB & 0.3429 & 0.2857 & 0.4500 & 0.5726 \\
        & LAP & 0.4870 & 0.4444 & 0.5500 & 0.6117 \\
        & \textbf{iLoRA} & 0.4852 & 0.4000 & 0.3500 & 0.5060 \\
        \cmidrule{1-6}

        \multirow{7}{*}{\textbf{Franzosa\_2019B}} 
        & MLE & 0.1901 & 0.6667 & 0.8438 & 0.8849 \\
        & MAP & 0.2098 & 0.6667 & 0.9375 & 0.9324 \\
        & MCD & \textbf{0.1701} & \textbf{0.8571} & \textbf{0.9500} & \textbf{0.9472} \\
        & ENS & 0.2361 & 0.7692 & 0.9375 & 0.9324 \\
        & BLOB & 0.2300 & 0.5455 & 0.8375 & 0.8368 \\
        & LAP & 0.2099 & 0.6667 & 0.9250 & 0.9249 \\
        & \textbf{iLoRA} & 0.1794 & 0.7143 & \textbf{0.9500} & 0.9415 \\
        \cmidrule{1-6}

        \multirow{7}{*}{\textbf{Ning\_2023}} 
        & MLE & 0.3879 & 0.0000 & \textbf{0.6556} & 0.4153 \\
        & MAP & 0.2918 & \textbf{0.3636} & 0.5333 & 0.3485 \\
        & MCD & 0.3769 & 0.3333 & 0.6333 & 0.4817 \\
        & ENS & 0.2818 & 0.2222 & 0.5333 & 0.4397 \\
        & BLOB & 0.2600 & 0.0000 & 0.6000 & 0.3588 \\
        & LAP & 0.2910 & \textbf{0.3636} & 0.5444 & 0.3624 \\
        & \textbf{iLoRA} & \textbf{0.0964} & 0.2857 & \textbf{0.7278} & \textbf{0.6500} \\
        \cmidrule{1-6}

        \multirow{7}{*}{\textbf{Lloyd-Price\_2019}} 
        & MLE & 0.3133 & 0.5000 & \textbf{0.7143} & 0.7442 \\
        & MAP & 0.2929 & 0.6000 & \textbf{0.7143} & 0.6976 \\
        & MCD & 0.2257 & 0.5714 & \textbf{0.7143} & \textbf{0.7976} \\
        & ENS & \textbf{0.1947} & 0.3333 & \textbf{0.7143} & 0.7076 \\
        & BLOB & 0.2763 & \textbf{0.7500} & 0.6286 & 0.7633 \\
        & LAP & 0.2909 & 0.6000 & \textbf{0.7143} & 0.6976 \\
        & \textbf{iLoRA} & 0.3581 & 0.4444 & 0.6143 & 0.6176 \\
        \cmidrule{1-6}

        \multirow{7}{*}{\textbf{Ananthakrishnan\_2017}} 
        & MLE & 0.2926 & 0.6667 & 0.7381 & 0.8341 \\
        & MAP & \textbf{0.0825} & \textbf{0.9231} & \textbf{0.8571} & \textbf{0.9341} \\
        & MCD & 0.3764 & 0.5455 & 0.5952 & 0.7508 \\
        & ENS & 0.4370 & 0.4000 & 0.7619 & 0.8044 \\
        & BLOB & 0.3371 & 0.4444 & 0.8095 & 0.8600 \\
        & LAP & 0.0836 & \textbf{0.9231} & \textbf{0.8571} & \textbf{0.9341} \\
        & \textbf{iLoRA} & 0.1589 & 0.8333 & 0.8214 & 0.8984 \\
        \cmidrule{1-6}

        \multirow{7}{*}{\textbf{Franzosa\_2019N}} 
        & MLE & \textbf{0.1001} & \textbf{1.0000} & \textbf{1.0000} & \textbf{1.0000} \\
        & MAP & 0.1760 & 0.8889 & \textbf{1.0000} & \textbf{1.0000} \\
        & MCD & 0.3087 & 0.7500 & 0.5833 & 0.6458 \\
        & ENS & 0.2427 & 0.7500 & 0.9167 & 0.9500 \\
        & BLOB & 0.3285 & 0.8889 & 0.7500 & 0.8042 \\
        & LAP & 0.1806 & 0.8889 & \textbf{1.0000} & \textbf{1.0000} \\
        & \textbf{iLoRA} & 0.1671 & 0.8889 & 0.8333 & 0.8875 \\
        
        \bottomrule
    \end{tabular}
\end{table*}

\newpage
\section{Details of the statistical analysis}
\label{sec:11x}

We define cohort-level sets of significant taxon pairs from two complementary perspectives: on the one hand, we analyze microbe--microbe co-variation in a compositionally corrected space to obtain edges that reflect stable co-occurrence/exclusion structure; on the other hand, we test log-ratio--based pairwise features against the diagnostic label \(y\) (CD vs.\ UC) to obtain edges that capture compositionally interpretable contrasts associated with the phenotype. 

\paragraph{(i) Cohort-level microbe--microbe association: CLR--Spearman + BH--FDR.}

Let \(x_{ij}\) denote the relative abundance of taxon \(j\) in sample \(i\). Because microbial abundances satisfy the closure constraint \(\sum_{j} x_{ij} = 1\), computing correlations directly in the original proportion space is affected by well-known compositional artifacts. We therefore evaluate microbe--microbe association in a log-ratio space. Specifically, on the selected set of \(20\) taxa we add a small pseudocount \(c > 0\), perform row-wise closure, and compute the centered log-ratio (CLR) transform:
\begin{align}
  \tilde x_{ij} &= \frac{x_{ij} + c}{\sum_{\ell = 1}^{p} (x_{i\ell} + c)},\\
  z_{ij} &= \log \tilde x_{ij} - \frac{1}{p}\sum_{\ell = 1}^{p}\log \tilde x_{i\ell},
\end{align}
yielding \(Z \in \mathbb{R}^{n \times p}\).

For each undirected taxon pair \((j,k)\) among the \(\binom{20}{2} = 190\) candidates, we define the cohort-level association strength via the Spearman rank correlation
\begin{align}
  w^{\mathrm{Spear}}_{jk} = \rho_s(Z_{\cdot j}, Z_{\cdot k}),
\end{align}
where \(w^{\mathrm{Spear}}_{jk} > 0\) indicates a co-occurrence trend and \(w^{\mathrm{Spear}}_{jk} < 0\) indicates a tendency toward mutual exclusion. We compute two-sided \(p\)-values for all pairs and apply Benjamini--Hochberg (BH) correction over the \(190\) tests to control the false discovery rate (FDR), obtaining \(q^{\mathrm{Spear}}_{jk}\). We then define the set of cohort-level significant microbe--microbe associations as
\[
  E_{\mathrm{Spear}} = \{(j,k) : q^{\mathrm{Spear}}_{jk} < 0.05\}.
\]
Fig.\ref{fig:heatmap} provides an overview of \(E_{\mathrm{Spear}}\), including the sign and FDR-adjusted strength of association to show the microbe–microbe association.

\begin{figure}[H]
    \centering
    \includegraphics[width=0.4\linewidth]{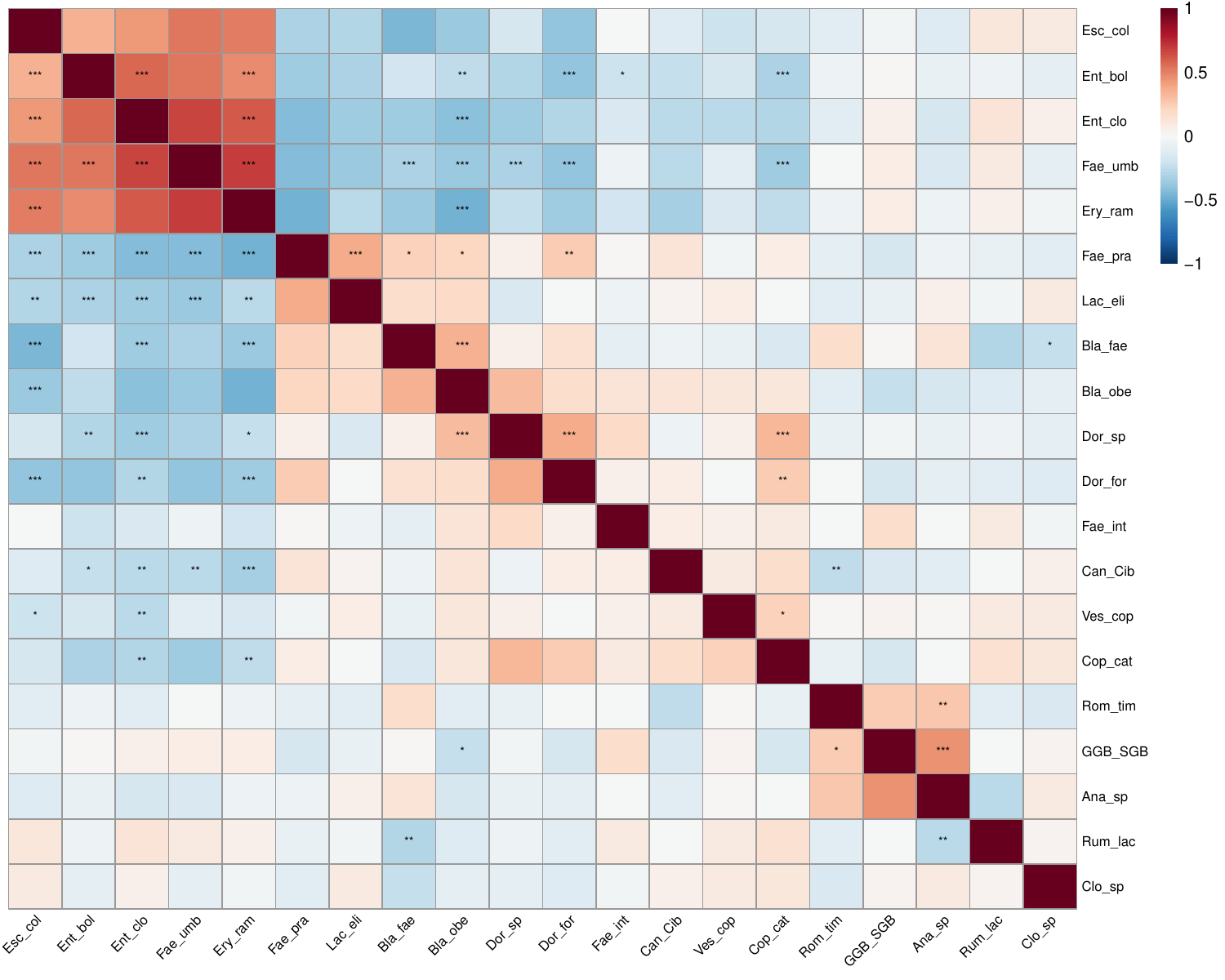} 
    \caption{Heatmap of microbe--microbe association}
    \label{fig:heatmap}
\end{figure}

\paragraph{(ii) Cohort-level pair \(\rightarrow y\) association: log-ratio + logistic + BH--FDR (Fig.~6b).}

To characterize compositional contrasts associated with diagnosis, we construct, for each taxon pair \((a,b)\), a log-ratio feature
\begin{align}
  s_{ab,i} = \log(x_{ia} + \epsilon) - \log(x_{ib} + \epsilon),
\end{align}
where \(\epsilon > 0\) is used for zero stabilization. This construction is exactly equivalent to the difference of CLR coordinates,
\begin{align}
  s_{ab,i} = \mathrm{CLR}(a)_i - \mathrm{CLR}(b)_i,
\end{align}
and is therefore coherent within the compositional framework, with a direct interpretation: larger \(s_{ab,i}\) indicates that taxon \(a\) dominates taxon \(b\) more strongly in sample \(i\). Moreover, this ``single-taxon vs.\ single-taxon'' log-ratio can be viewed as the special case of a balance contrast where each side of the balance consists of a single taxon, in which case it reduces to \(\log(x_A / x_B)\). This choice is consistent with balance/log-ratio--based association analyses widely used in clinical microbiome studies, for example in the longitudinal immune checkpoint blockade (ICB) melanoma cohort of Bj\"ork et al.~\citep{Bjork2024NatMed}, where taxon-level log-ratio summaries are related to treatment response and disease status.

We fix CD as the positive class (\(y = 1\), UC \(= 0\)) and, for each pair, fit a univariate logistic regression model
\begin{align}
  \Pr(y_i = 1 \mid s_{ab,i}) = \sigma(\alpha + \beta_{ab}\, s_{ab,i}),
\end{align}
testing \(H_0: \beta_{ab} = 0\) to obtain \(p^{\mathrm{ratio}}_{ab}\). We then apply BH--FDR correction over the 190 tests to obtain \(q^{\mathrm{ratio}}_{ab}\) and define
\begin{align}
  E_{\mathrm{ratio}} = \{(a,b) : q^{\mathrm{ratio}}_{ab} < 0.05\}.
\end{align}
When (quasi-)complete separation leads to unstable maximum-likelihood estimates, we employ bias-reduced or penalized logistic regression to obtain stable coefficient estimates; this implementation detail does not alter the screening rule itself. With CD defined as the positive class, \(\beta_{ab} > 0\) indicates that increasing \(\log(A/B)\) is associated with higher log-odds of CD, while \(\beta_{ab} < 0\) corresponds to the opposite direction. Fig.\ref{fig:ratio} summarizes the direction of effects and FDR-supported significance across all pairs.

\begin{figure}[H]
    \centering
    \includegraphics[width=0.9\linewidth]{ratio.pdf} 
    \caption{Pair-Y (CD vs UC): conditional log-ratio network (q<0.05)}
    \label{fig:ratio}
\end{figure}

\paragraph{(iii) Definition and role of the intersection reference set \(E_{\mathrm{GT}}\).}

The two screening procedures yield edge sets \(E_{\mathrm{Spear}}\) and \(E_{\mathrm{ratio}}\), respectively. We define their intersection as the cohort-level statistical reference set:
\begin{align}
  E_{\mathrm{GT}} \;=\; E_{\mathrm{Spear}} \cap E_{\mathrm{ratio}} .
\end{align}
In our dataset, \(|E_{\mathrm{GT}}| = 41\). At the chosen FDR threshold, this set collects taxon pairs that have \emph{both} compositionality-corrected evidence of co-variation and log-ratio--based evidence of association with diagnosis. In the subsequent analysis, we explicitly treat \(E_{\mathrm{GT}}\) as a \emph{cohort-level statistical reference}, rather than a complete ``true network'': through the use of an intersection and multiple-testing control, the construction is conservative with respect to false positives and may therefore miss interactions that occur only in specific subgroups of samples or that depend on higher-order conditional structures. Such potentially missed signals are precisely the type of structure that sample-conditioned modeling is designed to complement.

\paragraph{(iv) Comparison with iLoRA sample-level graphs: overlap enrichment and sample-conditioned structure.}

For each sample, iLoRA outputs directed edge strengths. To align with the undirected taxon pairs in the reference, we symmetrize the outputs for each sample \(i\) as
\begin{align}
  \tilde w^{(i)}_{jk} = \frac{w^{(i)}_{j\rightarrow k} + w^{(i)}_{k\rightarrow j}}{2}.
\end{align}

\rev{Among} all 190 undirected pairs, we then select, per sample, the top \(10\%\) edges according to \(\tilde w^{(i)}_{jk}\), i.e., \(K = \lceil 0.1 \times 190 \rceil = 19\), yielding a predicted edge set \(P^{(i)}_K\).
Concretely, \(P^{(i)}_K\) is obtained by ranking all candidate undirected pairs \((j,k)\) in descending order of \(\tilde w^{(i)}_{jk}\) and taking the top \(K\) edges. Equivalently, define the rank
\begin{align}
  r^{(i)}_{jk} \;=\; 1 + \sum_{(u,v)\in\mathcal{E}} \mathbb{I}\!\left(\tilde w^{(i)}_{uv} > \tilde w^{(i)}_{jk}\right),
  \qquad
  P^{(i)}_K \;=\; \{(j,k)\in\mathcal{E}: r^{(i)}_{jk}\le K\},
\end{align}
where \(\mathcal{E}\) is the set of all 190 candidate undirected pairs.

To quantify agreement with the cohort-level reference, we treat the globally significant edge set \(E_{\mathrm{GT}}\) (with \(|E_{\mathrm{GT}}|=41\)) as ground-truth labels
\(y_{jk}=\mathbb{I}\{(j,k)\in E_{\mathrm{GT}}\}\),
and binarize the sample-level top-\(K\) prediction as
\(\hat y^{(i)}_{jk}=\mathbb{I}\{(j,k)\in P^{(i)}_K\}\).
We then compute a per-sample error rate as the average of miss and false-alarm terms:
\begin{align}
  \mathrm{Err@}K(i)
  \;=\;
  \frac{1}{2}\left(
  \frac{|E_{\mathrm{GT}}\setminus P^{(i)}_K|}{|E_{\mathrm{GT}}|}
  +
  \frac{|P^{(i)}_K\setminus E_{\mathrm{GT}}|}{190-|E_{\mathrm{GT}}|}
  \right),
\end{align}
and report the test-set average
\(\mathrm{Err@}K = \frac{1}{n_{\mathrm{test}}}\sum_{i\in\mathcal{I}_{\mathrm{test}}}\mathrm{Err@}K(i)\).
As a random baseline, we uniformly sample \(K\) edges from the 190 candidates as positives; in expectation, the miss and false-alarm terms are symmetric around \(K/190\), yielding \(\mathbb{E}[\mathrm{Err@}K]=0.5\).
Empirically, iLoRA achieves \(\mathrm{Err@}19\) well below this random baseline (see \rev{Table~}\ref{tab:top10_erro}), indicating that its highest-weight sample-level edges are globally enriched for cohort-supported pairs.

\section{Limitations}
\label{app:limitations}
iLoRA has several limitations. 
First, the graph branch operates on a selected subset of entities. 
In the IBD experiments, we use MaAsLin2 to select statistically significant taxa before latent graph inference. 
This improves tractability and reduces noise in sparse, zero-inflated microbiome profiles, but it may miss informative long-tail taxa or interactions involving taxa excluded by the upstream feature-selection step. 
Second, the pairwise graph construction scales as $O(K^2)$ in the number of selected entities. 
Although this cost is modest for $K=20$ in our experiments, substantially larger graphs would require approximate edge screening, sparse candidate generation, or hierarchical graph construction. 
Third, iLoRA introduces additional inference cost over static LoRA because it performs sample-specific graph inference and hypernetwork-based LoRA generation. 
In our setting this overhead is moderate and remains much smaller than Deep Ensemble, but it should be considered in latency-sensitive deployments. 
Fourth, the biomedical evaluation focuses on UC vs.\ CD diagnosis because this is the endpoint consistently available across the aggregated public cohorts. 
Important clinical tasks such as treatment-response prediction, disease severity estimation, and longitudinal risk modeling remain future work. 
Finally, the inferred interaction graphs should be interpreted as predictive and statistical interaction structures rather than causal microbial mechanisms; biological conclusions require external validation.

\end{document}